\DeclareMathOperator*{\argmax}{arg\,max}
\DeclareMathOperator*{\argmin}{arg\,min}
\newtheorem{remark}{Remark}
\newtheorem{theorem}{Theorem}
\newtheorem{definition}{Definition}
\title{\LARGE \bf Learning Hierarchical Control For Multi-Agent Capacity-Constrained Systems}
\author{Charlott Vallon$^{1}$, Alessandro Pinto$^{2}$, Bartolomeo Stellato$^{3}$, Francesco Borrelli$^{1}$ 
\thanks{$^{1}$C. Vallon and F. Borrelli are with the Department of Mechanical Engineering,
        University of California, Berkeley,  Berkeley, CA 94720}%
\thanks{$^{2}$A. Pinto is with the NASA Jet Propulsion Laboratory, Pasadena, CA 91011}%
\thanks{$^{3}$B. Stellato is with the Department of Operations Research and Financial Engineering, Princeton University, Princeton, NJ 08544}
}
\begin{document}

\maketitle
\thispagestyle{empty}
\pagestyle{empty}

\begin{abstract}
This paper introduces a novel data-driven hierarchical control scheme for managing a fleet of nonlinear, capacity-constrained autonomous agents in an iterative environment. 
We propose a control framework consisting of a high-level dynamic task assignment and routing layer and low-level motion planning and tracking layer. 
Each layer of the control hierarchy uses a data-driven Model Predictive Control (MPC) policy, maintaining bounded computational complexity at each calculation of a new task assignment or actuation input. 
We utilize collected data to iteratively refine estimates of agent capacity usage, and update MPC policy parameters accordingly.  
Our approach leverages tools from iterative learning control to integrate learning at both levels of the hierarchy, and coordinates learning between levels in order to maintain closed-loop feasibility and performance improvement of the connected architecture. 
\end{abstract}

\section{Introduction}

This paper explores iterative, data-driven hierarchical control for task assignment and control of an autonomous fleet of capacity-constrained agents. 
Hierarchical control is integral to a wide range of industrial applications for enabling complex, multi-leveled processes by breaking down decision-making into manageable levels.
Applications include power generation and distribution \cite{YAMASHITA2020109523}, \cite{SHEN2020106079}, supply chain management \cite{app13010075}, and traffic flow coordination \cite{Xu_Wang_Wang_Jia_Lu_2021, 9899376, 9145866, HAN20201}.

A common hierarchical control framework considers a high-level controller which calculates a reference path to be tracked by a low-level controller. 
To ensure stability of the connected architecture, the higher level may also transmit to the lower level the maximum allowable deviation from this reference withstand without losing stability guarantees \cite{4434079, scattolini_2008, scattolini_article}. 
In \cite{koegel2022safe, findeisen_2018}, authors formalize this concept as a ``contract" between hierarchy levels: if the high-level planner takes certain restrictions (contracts) into account while path-planning (e.g. from reachability and invariance tools), the low-level controller can guarantee a bounded maximum deviation of the system state from the planned path. 
A variety of methods for calculating such maximum deviation bounds are proposed in \cite{10197175}, \cite{HerbertCHBFT17}, \cite{vermillion2013stable}.

Other works use reachability analysis to ensure feasibility of the hierarchical control architecture.
Authors in~\cite{9147685}, \cite{koeln_article} design a two-level hierarchical Model Predictive Controller (MPC) for a linear time-invariant system in which the low-level MPC terminal constraint is the backward reachable set of the reference path given by the the high-level MPC,
allowing the low-level controller the flexibility to improve upon the reference path.
In~\cite{vallon2020datadriven}, a low-level controller tracks a high-level reference path as closely as possible while remaining in an offline-determined safe set. 

Several hierarchical control approaches have been developed for multi-agent systems, where feasibility guarantees, when provided, are typically demonstrated using simple reachability methods for linear systems \cite{raghuraman2022hierarchical}, \cite{article}, \cite{6669555}.

Unlike methods discussed in the literature, our contribution addresses \textit{safe, data-driven} hierarchical control of a fleet of nonlinear, capacity-constrained agents operating in an iterative environment, where tasks are repeated multiple times.
A high-level centralized non-convex controller dynamically assigns a route (``task assignment") to each agent, and agents are then controlled with a low-level MPC that executes, and may improve upon, the assigned route. 
We demonstrate how to integrate learning at each level of the architecture in order to enhance control performance at each additional iteration, while maintaining guarantees of constraint satisfaction.

We show how to 
\begin{enumerate}
    \item use collected agent trajectory data to safely improve high-level task assignments, 
    \item design a low-level MPC that satisfies and can safely improve upon high-level task assignments, and
    \item prove the feasibility and iterative performance improvement of the constructed data-driven hierarchical controller.
\end{enumerate}


\section{Notation}
Unless otherwise indicated, all introduced vectors are column vectors. Parentheses $(~ )$ stack vectors vertically, and brackets $[ ~]$ stack vectors horizontally.
The notation $a = {\bf{0}}_{b}$ indicates a vector $a \in \mathbb{R}^{b}$, where $a_{i} = 0,~ i \in [1,b]$. 

\section{Problem Formulation}\label{sec:pf}

We explore the problem of dynamic task assignment and control for a set of $M$ autonomous, capacity-constrained agents working to complete a shared set of tasks.
``Dynamic task assignment" refers to assigning a subset of tasks to each agent (e.g. which tasks will be completed by which agent) as a function of the agent state. 
Examples include a group of electric delivery vans maximizing total serviced customers each day, or a fleet of battery-powered Lunar rovers aiming to perform as many exploratory or maintenance tasks as possible in a given time frame.

Dynamic task assignment can be modeled in a number of ways; here, we consider path-planning over a graph \cite{8968151, 7539623, article47}. 
We model the set of tasks as a graph $G(V, E)$, as depicted in Fig.~\ref{fig:enter-label}. Each node $v_i \in V,~ i \in \{1, 2, \dots, |V|\}$ represents a task, and each edge $e_{i,j} \in E \subseteq V \times V$ represents the possibility to complete the tasks associated with nodes $v_i$ and $v_j$ in sequence.
\begin{figure}[t]
    \centering
    \includegraphics[width = 0.65\columnwidth]{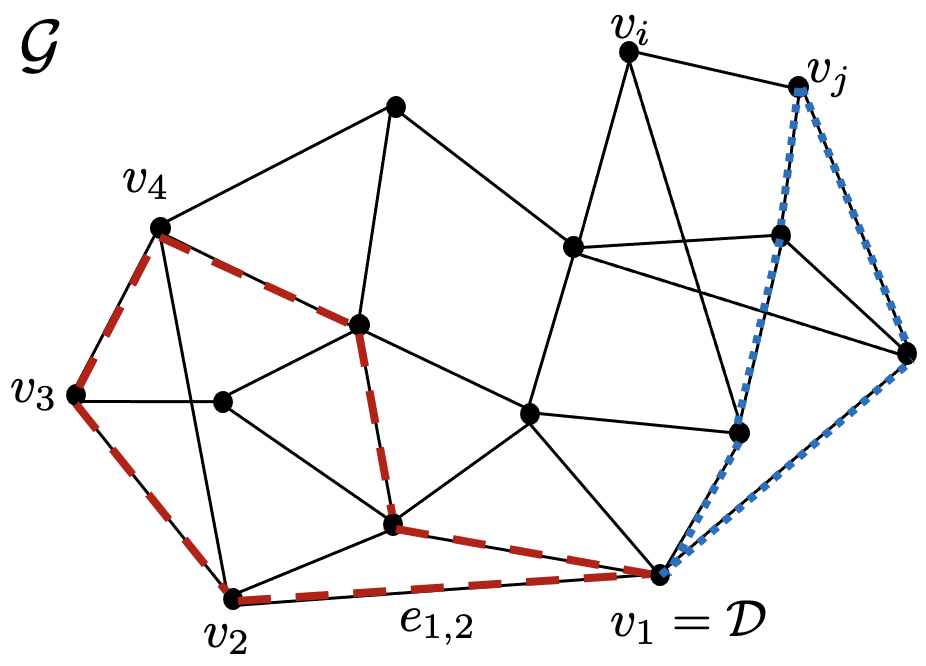}
    \caption{Graph $G$ models tasks (nodes $v_i$) and connections between tasks (edges $e_{i,j}$). A fleet of $M$ agents initially at $v_1=D$ jointly completes as many tasks as possible while without depleting capacities before returning to $v_1=D$. A task assignment for $M=2$ agents is shown in red and blue.}
    \label{fig:enter-label}
\end{figure}
The node $v_1 = D$ represents the initial state of all the agents, and where they must return after completing their assigned tasks.
If nodes represent geographical locations, this could correspond to the base station or depot. 
The set $N(i)$ returns the list of nodes which neighbor~$v_i$ (i.e. $\forall j \in N(i), ~e_{i,j} \in E$). 

Without loss of generality, and for sake of notional simplicity, agents are assumed to have the same dynamics. 
The state of the $m$th agent at time step $t$ is denoted as $x^m_t \in \mathbb{R}^n$, subject to discrete-time nonlinear dynamics
\begin{align} \label{eq:agentdyn}
    x^m_{t+1} = f(x^m_t, u^m_t),
\end{align}
where $u^m_t \in \mathbb{R}^u$ is the actuation input applied to the $m$th agent at time step $t$.  
The states and inputs are constrained as $x_t \in \mathcal{X}$ and $u_t \in \mathcal{U}$. 
A subset of each agent's state $c \in \mathbb{R}^L,~ L \leq n$ describes its $L$ capacities, which may include any limited resource, such as battery charge or travel duration:
\begin{align}
    x^m_t &= (c^m_t, \chi^m_t)\label{eq:agentstatebreakdown}\\
    c^m_{t} &= (c^m_{t,1}, c^m_{t,2}, \dots, c^m_{t,L})\label{eq:capstate},
\end{align}
where $c^m_{t,l}$ is the value of the $l$th capacity state of agent $m$ at time step $t$, and $\chi^m_t \in \mathbb{R}^{n-L}$ are the agent's non-capacity states.
The capacity and non-capacity states evolve as
\begin{align}
    c^m_{t+1} &= c^m_t + g_c(\chi^m_t, u^m_t) \label{eq:capacitydynamics} \\
    \chi^m_{t+1} &= g_{\chi}(\chi^m_t, u^m_t),\label{eq:noncapacitydynamics}
\end{align}
where $g_c(\cdot, \cdot) \geq 0$ (component-wise positive), and are subject to constraints 
\begin{align}\label{eq:agentcons}
    & c^m_{t, l} \in \mathcal{C}_l,~ l \in [1,L], ~~ \chi^m_t \in X,
\end{align}
where $\mathcal{C}_l = [0, C_l]$ and capacity state constraints are jointly denoted as $c^m_t \in \mathcal{C}$. 
The state constraints $x^m_t \in \mathcal{X}$ are satisfied if and only if both $c^m_t \in \mathcal{C}$ and $\chi^m_t \in X$.




We introduce the set $\mathcal{N}_j$ to describe that agent $m$ is at node $v_j$ at time step $t$ if 
$x^m_t \in \mathcal{N}_j$:
\begin{align}\label{eq:nodejdef}
    \mathcal{N}_j = \big\{(c, \chi) \in \mathbb{R}^n \mid  a(\chi) = j,\quad \chi = \bar{\chi}_j \big\},
\end{align}
where the function $a(\cdot)$ maps a non-capacity state $\chi$ to a node index $j$. 
For all states in $\mathcal{N}_j$, the the non-capacity states $\chi$ have fixed value $\bar{\chi}_j$, though this value can depend on $j$.

Our goal is to find a feasible input sequence for each agent (\ref{eq:agentdyn}) that optimizes a function of the agent task assignment; here we maximize the total number of unique nodes visited. We formulate the corresponding optimal control problem as
\begin{subequations}\label{eq:mainproblem}
\begin{align}
    \max_{v,x, u, T} ~ & \sum_{m \in \mathcal{M}} \sum_{i, j \in \mathcal{V}} v^{m}_{i,j} \label{eq:cost}\\
    \text{s.t.}~~ & b(v, x, u, T) \leq 0 \label{eq:mainproblemconstraints}
\end{align}
\end{subequations}
where $\mathcal{M} = \{1, 2, \dots, M \}$ and $\mathcal{V} = \{D, 2, 3, \dots, |V|\}$.
The first decision variable ${{v}} \in \{0, 1\}^{|\mathcal{V}|\times |\mathcal{V}| \times |\mathcal{M}|}$ contains the binary decision variables $v^m_{i,j}$ that describe whether agent $m$ is routed along edge $e_{i,j}$. The objective function therefore maximizes how many total nodes are visited collectively by the agents. 
The second decision variable ${x} \in \mathbb{R}^{n\times \bar{T} \times m}$ tracks all states $x^m_t$ of each $m$th agent at each time step $t \in [0, \bar{T}]$ under the inputs ${{u}} \in \mathbb{R}^{u\times (\bar{T}-1) \times m}$ until the maximum allowed time $\bar{T}$. ${{T}} \in \mathbb{R}^{|V| \times M}$ contains the time steps $T^m_j$ when agent $m$ reaches node $v_j$. If agent $m$ never reaches node $v_j$, $T^m_j = 0$. The optimal control problem (\ref{eq:mainproblem}) simultaneously searches for a path ${{v}^{\star}}$ that maximizes the number of nodes reached by agents, and a sequence of continuous actuation inputs ${{u}^{\star}}$ that result in the agents visiting the nodes before returning to the depot at node $D$. 
The full constraint set (\ref{eq:mainproblemconstraints}) is listed in the Appendix. It includes constraints ensuring that each node is visited by at most one agent and that the planned path begins and ends at the depot. Additional constraints ensure the agents' state and input constraints are satisfied. 
This problem formulation is based on the Flow Model formulation of the capacitated vehicle routing problem; for details on this formulation, we refer to \cite{routing}.

The optimal control problem (\ref{eq:mainproblem}) is a nonlinear mixed integer program, and is difficult to solve without strong restrictions on $f$ (\ref{eq:agentdyn}), $\mathcal{X}$ (\ref{eq:agentcons}), and $\mathcal{N}_j$ (\ref{eq:nodejdef}). 
In practice, problem~\eqref{eq:mainproblem} would likely not be solved as a single shot problem; the steering, braking, and acceleration of a self-driving taxi isnot be determined at the same control level or frequency of the algorithm which  decides which clients to pick up first.
We propose combining tools from iterative learning control and hierarchical control to obtain an approximate solution to (\ref{eq:mainproblem}). 
The proposed controller has two levels. A high-level controller selects a unique sequence of nodes for each agent to visit based on the current agent state (\ref{eq:agentstatebreakdown}), and a low-level controller selects continuous actuation inputs $u_t \in \mathcal{U}$ for the agent to achieve the commanded task assignment.

We consider agents executing iterative tasks, where the set of tasks does not change over iterations.
An ``iteration" will refer to each time a set of tasks is assigned to and completed by an agent (consider a fleet of vehicles delivering packages to a subset of houses in a neighborhood each week). 
Given an initial feasible (sub-optimal) task assignment and corresponding low-level trajectories that satisfy the constraints in (\ref{eq:mainproblem}), we will utilize collected data to safely increase the number of tasks completed by the agents as more data is collected during subsequent iterations. 
Control performance improvement will come from a combination of improved high-level dynamic task assignment and improved low-level trajectory planning between tasks. 
In the following section, we formalize the hierarchical architecture, and propose a method for using collected data to gain performance improvements while maintaining closed-loop feasibility.

\textit{For ease of presentation, the remainder of the paper will be written in terms of a single agent}. All described methods can be straightforwardly extended to multi-agent systems, as remarked in Sec.~\ref{ssec:multiagent}. 

\section{Iterative Data-Driven Hierarchical Control}\label{sec:iterative}
This section introduces our proposed hierarchical, iterative approach to solving (\ref{eq:mainproblem}). 
First, the two-level hierarchy is introduced. 
At each iteration, the high-level controller assigns a subset of tasks to an agent, and the low-level controller chooses actuation inputs for the agent. 
We then formalize desiderata for the resulting closed-loop system. 

\subsection{High-level controller}

The goal of the high-level controller at each iteration $r$ is to find a feasible task assignment for an agent that maximizes the total number of tasks completed and returns the agent back to the depot without depleting any capacity states. 
The high-level controller explicitly models the agent's capacity states (\ref{eq:capstate}) with an event-based agent model, updating whenever the agent reaches a new node. The model captures how much the agent's capacity states drain as a result of a particular task assignment. 

We denote the high-level state of the agent at event $k$ of iteration $r$ as $x^{H,r}_k = (n^r_k, c^r_k)$,
where $n^r_k \in \mathcal{V}$ is the index of the $k$-th node the agent visits in iteration $r$, and the vector ${c}^r_k \in \mathbb{R}^L$ represents the values of the $L$ constrained capacity states, e.g. the agent's state of charge at event $k$.

We assume the existence of an invertible mapping $g : \mathbb{R}^{n} \rightarrow \mathbb{R}^{L+1}$ that transforms the agent state (\ref{eq:agentdyn}) into a corresponding high-level state: 
\begin{align}\label{eq:gfun}
    x^H & = g(x), ~~
   g^{-1}(x^H)= \{x~|~g(x) = x^H\}.
\end{align}
Note that $g$ and its inverse only need to be well-defined for $x \in \mathcal{N}_j$, for $j \in \mathcal{V}$. 

As in (\ref{eq:agentcons}), each capacity state is constrained according to $c^r_{k,l} \in {\mathcal{C}}_l = [0, {C}_l],~ l \in [1,L]$. We compactly write these constraints as  \[c^r_k \in \mathcal{C}.\]
In the optimal control problem formulation below, we will assume without loss of generality that capacity state values are initialized at $0$ at the beginning of the task, and increase during task execution. 
This can be straightforwardly adapted to capacity states decreasing along routes (e.g. a vehicle's battery will initially be fully charged and then drain over a task assignment).

The high-level controller models the agent dynamics as 
\begin{subequations}\label{eq:hldynamics}
    \begin{align}
    n^r_{k+1} &= \sum_{ j \in \mathcal{V}} u^{H,r}_{k, n^r_k , j} j \\
    c^r_{k+1,l} &= c^r_{k,l} + \sum_{j \in \mathcal{V}} u^{H,r}_{k, n_k^r , j} \hat{\theta}^{r}_{n_k^r, j, l}~~~ l \in [1,~ L],\label{eq:11b}
\end{align}
\end{subequations}
where the input ${u}_k^{H,r} \in \{0,1\}^{|V| \times |V|}$ contains binary variables indicating a specific task assignment, with $u^{H,r}_{k,i,j} = 1$ indicating that according to the plan made at event $k$ during iteration $r$, the agent shall travel from node $v_i$ to node $v_j$ along the edge $e_{i,j}$. We compactly write (\ref{eq:hldynamics}) as 
\begin{align*}
    x^{H,r}_{k+1} = f^H({x^{H,r}_{k}}, u^{H,r}_k,\hat{\theta}^{r}).
\end{align*}

The variable $\hat{\theta}^r_{i,j,l}$ represents the high-level controller's estimate at the start of iteration $r$ of how much the capacity state $c_l$ will deplete during the agent's transition from node $v_i$ to node $v_j$. 
This corresponds to the high-level controller's approximation of the model (\ref{eq:capacitydynamics}) which describes the capacity dynamics at the low-level.
Compactly, we refer to estimates over all edges and capacity states at the start of iteration $r$ as $\hat{\theta}^r$.
The true value of ${\theta}$ is unknown to the high-level controller (and depends on the low-level controller); 
we only assume an initial bound ${\theta} \in \Theta$ is known.
At each iteration $r$, the high-level controller selects a control action based on a nominal estimate of the capacity model parameters $\hat{\theta}^r$ and its estimated worst-case bound ${\hat{\Theta}}^{r} \subseteq \Theta$. As more data is collected, the high-level controller aims to improve $\hat{\theta}^r$ and shrink ${\hat{\Theta}}^{r}$. 

\begin{remark}
The model in (\ref{eq:hldynamics}) is event-driven, updating whenever the agent reaches a new node; $c^r_k$ (\ref{eq:11b}) is the capacity state of the agent at event $k$ of task iteration $r$, when the agent reaches the $k$th visited node. In contrast, the model in (\ref{eq:agentdyn}) is time-based, updating at a fixed time interval; $c^r_t$ (\ref{eq:capstate}) is the capacity state of the agent at time $t$ of iteration $r$.
\end{remark}


\begin{remark}
Note that the high-level dynamics (\ref{eq:hldynamics}) may be compactly written as a linear event-varying system
\begin{align*}
    n^r_{k+1} &= B_n u^r_k \\
    c^r_{k+1} &= c^r_k + B_c(\hat{\theta}^{r}) u^r_k
\end{align*}
with $B_n$ and $B_c(\hat{\theta}^r)$ defined in the Appendix. 
For clarity, we use the notation in (\ref{eq:hldynamics}) throughout this paper. 
\end{remark}

At event $k$ of iteration $r$, the high-level controller's objective is to find a task assignment satisfying
\begin{subequations}\label{eq:highlevelproblem-iter}
\begin{align}
    u_k^{H,r}(x^{H,r}_k, \hat{\theta}^r, \hat{\Theta}^r) =  
    \argmax_{{u}, {c}} ~ & \sum_{i, j \in \mathcal{V}} u_{i,j} \label{eq:cost}\\
    s.t.~~ & b_2(u, c, \hat{\theta}^r, \hat{\Theta}^r) \leq 0 \label{eq:highlevelconstraints}
\end{align}
\end{subequations}
which maximizes how many nodes the agent visits while satisfying capacity constraints before returning to the depot to re-charge. The variable ${c} \in \mathbb{R}^{|V| \times |V| \times L}$ tracks the capacity states along the routes, with $c_{i,j,l}$ tracking the predicted value of the agent's $l$th capacity state as the agent departs node $v_i$ towards node $v_j$.
The full constraints (\ref{eq:highlevelconstraints}) are listed in the Appendix. (\ref{eq:highlevelproblem-iter}) primarily differs from (\ref{eq:mainproblem}) in its abstraction of the agent model (\ref{eq:hldynamics}) and by enforcing capacity constraints robustly with respect to the estimated bound set $\theta \in \hat{\Theta}^{r}$, since the true value of $\theta$ is not known.

We propose approximating (\ref{eq:highlevelproblem-iter}) with a data-driven MPC policy
\begin{align}\label{eq:ddhl}
    \pi^{H,r}(x^{H,r}_k, \hat{\theta}^r, \hat{\Theta}^r)~:~\mathcal{S}^{H,r} \rightarrow \{0,1\}^{|V|\times|V|}
\end{align}
which, at iteration $r$, maps a high-level state $x^{H,r}_k \in \mathcal{S}^{H,r}$ to a valid task assignment. 
The closed-loop high-level system under such a policy is 
\begin{subequations}\label{eq:hl-closedloop}
    \begin{align}
    u^r_k &= \pi^{H,r}(x^{H,r}_k, \hat{\theta}^r, \hat{\Theta}^r)\\
    n^{r,cl}_{k+1} &= \sum_{j \in \mathcal{V}} u^r_{k, n^r_k , j} j \\
    c^{r,cl}_{l, k+1} &= c^r_{l,k} + \sum_{j \in \mathcal{V}} u^r_{k, n^r_k , j} \theta_{n^r_k, j, l},~ l \in [1,~ L].
\end{align}
\end{subequations}

Given an initial high-level trajectory (\ref{eq:hl-closedloop}) corresponding to very conservative control performance, our aim is to first estimate $\pi^{H,1}$, $\mathcal{S}^{H, 1}$, $\hat{\theta}^{1}$, and $\hat{\Theta}^{1}$, and then use data collected during each subsequent task iteration $r$ to update $\pi^{H,r+1}$, $\mathcal{S}^{H, r+1}$, $\hat{\theta}^{r+1}$, and $\hat{\Theta}^{r+1}$ in order to improve control performance with each iteration. This is detailed in Sec.~\ref{sec:lmpc}.

\subsection{Low-level controller}

The goal of the low-level controller at each iteration $r$ is to select actuation inputs that execute the task assigned by the high-level controller. We consider such a task to consist of reaching the next node assigned by the high-level controller, denoted $n^{r\star}_{k+1|k}$. 
The low-level controller uses the time-based agent model as in (\ref{eq:agentdyn}), where the state of the agent at time step $t$ is denoted as $x_t \in \mathbb{R}^n$, subject to discrete-time dynamics
\begin{align} \label{eq:lowleveldyn}
    x_{t+1} = f(x_t, u_t),
\end{align}
where $u_t \in \mathbb{R}^u$ is the input applied at time step $t$. The low-level states and inputs are constrained as $x_t \in \mathcal{X}$ and $u_t \in \mathcal{U}$ (as in (\ref{eq:agentdyn})). 
At time step $t$ of iteration $r$ while moving from node $n^r_{k}$ to node $n^{r\star}_{k+1|k}$, the low-level controller's objective is to find an input sequence satisfying
\begin{align}\label{eq:lowleveliter}
    u^{\star}(x^{r}_t, n^r_{k}, n^{r\star}_{k+1|k}) &= \nonumber\\
    \argmin_{u, T} ~~ & \sum_{\tau = t}^{T}~ h(x_{\tau|t}, n^{r\star}_{k+1|k})\\
    \text{s.t.}~~& x_{\tau+1|t} = f(x_{\tau|t}, u_{\tau|t}) \nonumber\\
    & x_{\tau|t} \in \mathcal{X} ~~~~~~~ \forall \tau \in [t,t+T] \nonumber\\
    & u_{\tau|t} \in \mathcal{U}~~~ \forall \tau \in [t,t+T-1] \nonumber\\
    & x_{t|t} = x^{r}_t \nonumber\\
    & x_{t+T|t} \in \mathcal{N}_{n^{r\star}_{k+1|k}} \nonumber\\
    u^r_t = u^{\star}_{t|t} ~~~~~~~~~\nonumber&
\end{align}
which searches for a low-level input sequence that minimizes an objective function $h(x,j)$ over the duration of the task and drives the agent to the reference node indicated by the high-level controller. 
The objective function has the property
\begin{align}\label{eq:lowlevelh}
    h(x,j) \begin{cases}
			= 0 & x \in \mathcal{N}_j\\
            \geq 0 & \text{else}.
		 \end{cases}
\end{align}

We propose approximating the solution to (\ref{eq:lowleveliter}) with a data-driven MPC policy 
\begin{align}\label{eq:ddll}
    \pi^{L,r}(x^r_t, n^r_k, n^{r\star}_{k+1|k})~:~\mathcal{S}^{L,r} \rightarrow \mathbb{R}^u
\end{align}
which, at iteration $r$, maps an agent state in a set $\mathcal{S}^{L,r}$ to an input. 
The closed-loop system under this policy is
\begin{subequations}\label{eq:ll-closedloop}
\begin{align}
    u^{L,r}_t &=  \pi^{L,r}(x^{r,cl}_{t}, n^r_k, n^{r\star}_{k+1|k}) \\
    x^{r,cl}_{t+1} &= f(x^{r,cl}_{t}, u^{L,r}_t).
\end{align}
\end{subequations}
Given initial closed-loop low-level trajectories (\ref{eq:ll-closedloop}) corresponding to very conservative control performance, our aim is to first estimate $\pi^{L,1}$ and $\mathcal{S}^{L, 1}$, and then use additional data collected during each task iteration $r$ to update $\pi^{L,r+1}$ and $\mathcal{S}^{L, r+1}$ to improve control performance with each iteration while maintaining feasibility. This will be detailed in Sec.~\ref{sec:lmpc}.



\subsection{Hierarchical Control Architecture}
The high-level and low-level control policies (\ref{eq:ddhl}) and (\ref{eq:ddll}) are assembled into a hierarchical control architecture  as outlined in Alg.~\ref{alg:hierarchy}. 
\begin{algorithm}
\caption{Iterative Hierarchical Control}\label{alg:hierarchy}
\begin{algorithmic}
\Require $\pi^{H,1}, ~\mathcal{S}^{H,1}, ~\hat{\theta}^{1}, ~\hat{\Theta}^{1}, ~\pi^{L,1}, ~\mathcal{S}^{L,1}, ~ x^{1}_0 =x_S$
\State $r \gets 1$
\While{$\text{True}$}
\State $k \gets 0,~t \gets 0$
\State $x^{r}_t = x_S,~ x^{H,r}_k = g(x_S)$
\For{$k = 0, 1, \dots $}
\State $u^r_k = \pi^{H,r}(x^{H,r}_k, \hat{\theta}^{r}, \hat{\Theta}^{r})$ \Comment{(\ref{eq:highlevellmpc})}
\State $n^{r*} = \sum_{j \in \mathcal{V}} u^r_{k, n^r_k , j} j$
\State $i \gets n^r_k,~ j \gets n^{r*}$
\While{$x^{r}_t \not\in \mathcal{N}_{n^{r*}}$}
\State $u^{r}_t=  \pi^{L,r}(x^{r}_{t},i, j)$ \Comment{(\ref{eq:lowlevellmpc})}
\State $x^{r}_{t+1} = f(x^{r}_{t}, u^{r}_t)$
\State $t \gets t+1$
\EndWhile
\State $x_{k+1}^{H,r} = g(x^{r}_t)$
\If{$n_{k+1}^{H,r} = D$}
\State $r \gets r+1$
\State $\text{Construct } \hat{\theta}^{r}, \hat{\Theta}^{r}$ \Comment{(\ref{eq:thetaupdate}), (\ref{eq:bigthetaupdate})}
\State $\text{Construct } \mathcal{S}^{H,r},  \mathcal{S}^{L,r}$ \Comment{(\ref{eq:highlevelss-update}), (\ref{eq:lowlevelss-update})}
\State $\text{Construct } \pi^{H,r}, \pi^{L,r} $ \Comment{(\ref{eq:highlevellmpc}), (\ref{eq:lowlevellmpc})}
\State $\text{Break. Iteration $r$ complete.}$
\EndIf
\EndFor
\EndWhile
\end{algorithmic}
\end{algorithm}
At the start of each iteration $r \geq 1$, the agent is located at node $v_1 = D$ (the depot in our example), with the same initial state $x^{r}_0 = x_S \in \mathcal{N}_D$. 
The high-level controller (\ref{eq:ddhl}) assigns the next node for the agent to travel to, and the low-level controller (\ref{eq:ddll}) selects the corresponding low-level actuation inputs. Once the agent has reached the assigned node, the high-level controller assigns a new node. This process repeats until the agent has returned to the depot, at which point the policies $\pi^H$ and $\pi^L$ and their associated parameters $\mathcal{S}^H$, $\mathcal{S}^L$, $\hat{\theta}$, and $\hat{\Theta}$ are updated using closed-loop trajectory data from iteration $r$.

After iteration $r$, we write the agent's resulting closed-loop trajectory produced by Alg.~\ref{alg:hierarchy} as
\begin{align}\label{eq:arch-closedloop}
    &{\bf{x}^r}(x_S, \pi^{H,r}, \mathcal{S}^{H,r}, \hat{\theta}^r,\hat{\Theta}^r, \pi^{L,r},\mathcal{S}^{L,r}) = \\
    & ~~~~=[x_S, x^r_1, \dots, x^r_{\tau 1}, x^r_{\tau 1 + 1}, \dots,  x^r_{\tau 2}, \dots, x^r_{\tau K}] \nonumber \\ 
    & x^r_{0} \in \mathcal{N}_D \nonumber \\
& x^r_{\tau i} \in \mathcal{N}_j~~~~~~~~~~~~~~~~~~~~~~~~~~~~~~~ \forall i \in [1,K],~\exists j \in \mathcal{V} \nonumber \\
    &x^r_{\tau K} \in \mathcal{N}_D \nonumber \\
    &x^r_{t+1} = f(x^r_t, \pi^{L,r}(x^r_t))  ~~~~~~~~~~~~~~~~~~~ \forall t \in [0,\tau K-1] \nonumber \\
    & \pi^{L,r}(x^r_t) \in \mathcal{U} ~~~~~~~~~~~~~~~~~~~~~~~~~~~~~~~\forall t \in [0,\tau K-1] \nonumber \\
    &x^r_t \in \mathcal{X} ~~~~~~~~~~~~~~~~~~~~~~~~~~~~~~~~~~~~~~~~~~~ \forall t \in [0,\tau K]\nonumber \\
    &g(x^r_{\tau (i+1)}) = f^H(x^r_{\tau i},\pi^{H,r}(g(x^r_{\tau i}), \hat{\theta}^r, \hat{\Theta}^r), \theta) ~ i \in [1,K]\nonumber 
\end{align}
Throughout the rest of the paper, we will succinctly refer to (\ref{eq:arch-closedloop}) as ${\bf{x}}^r$, understanding that at iteration $r$ the control parameters are $\pi^{H,r}, \mathcal{S}^{H,r}, \hat{\theta}^r, \hat{\Theta}^r, \pi^{L,r},$ and $\mathcal{S}^{L,r}$. 

\subsection{Properties}
A significant challenge in hierarchical control is the coordination of control levels operating at different frequencies and using different agent models. 
Without coordination, the high-level controller may assign task routes that the agents cannot achieve while satisfying low-level state and actuation constraints. 
We define desirable properties for a control architecture as in Alg.~\ref{alg:hierarchy}, which can be achieved with proper coordination.

\begin{definition}[Iterative Feasibility]
    A control architecture as in Alg.~\ref{alg:hierarchy} is \textit{iteratively feasible} from an initial agent state $x_S$ if the feasibility of ${\bf{x}}^{r}$ implies the feasibility of ${\bf{x}}^{r+1}$ for all $r \geq 1$.   
\end{definition}
Iterative feasibility ensures that if the hierarchical controller ever finds a feasible task assignment and corresponding agent trajectory (or is provided with one), it will remain feasible at subsequent iterations.

We define the function
    \begin{align*}
         g_{\tau}(x_0, x_1, \dots, x_T) = [x^{H}_0, x^{H}_1, \dots, x^{H}_K] = {\bf{x}}^H
    \end{align*}
mapping an agent trajectory to the corresponding high-level state trajectory. 
Given a high-level state trajectory, we define 
\begin{align}\label{eq:hierarchicalcost}
    V_n^H([x^{H}_0, x^{H}_1, \dots, x^{H}_K])
\end{align}
to count the number of unique nodes apart from the depot visited in the trajectory.
We can now define the function 
\begin{align*}
    V_n({\bf{x}}^r) = V_n^H(g_{\tau}({\bf{x}}^r))
\end{align*}
which maps a closed-loop agent trajectory to how many unique nodes the agent visited (tasks the agent accomplished) before returning to the depot.
This is the function the hierarchical controller in Alg.~\ref{alg:hierarchy} aims to maximize at each iteration.

\begin{definition}[Iterative Performance Improvement]
    A control architecture as in Alg.~\ref{alg:hierarchy} exhibits \textit{iterative performance improvement} from an initial state $x_S$ if  $V_n({\bf{x}}^r) \leq V_n({\bf{x}}^{r+1})$. 
\end{definition}
Iterative Performance Improvement ensures that closed-loop performance will not suffer as more task iterations are performed and more data is collected. 
 
Whether a hierarchical control architecture satisfies the properties defined above depends on the methodology for using collected closed-loop data to update the policy parameters. 
It is not hard to prove that in the case of no model uncertainty,  
the properties are satisfied by using a low-level controller that reduces capacity consumption on a particular task with each subsequent iteration (e.g. capacity consumption on the $r$th traversal of edge $e_{i,j}$ is no greater than during traversal $(r+1)$ of edge $e_{i,j}$). In this case, a high-level planner could safely utilize the most recent capacity consumption along each edge as a robust estimate of future capacity consumption. 

There are two significant challenges. First, finding a computationally efficient low-level controller that guarantees capacity consumption reduction is not trivial. 
Second, depending on the size of graph $G$, dynamically calculating a new high-level task assignment can become computationally intractable. 
In the following section we describe our proposed formulation using data-driven MPC for achieving iterative feasibility and iterative performance improvement in a computationally efficient way. 
We show how to design a low-level controller with fixed, short horizon that achieves capacity consumption reduction with each iteration, and show how to design a high-level controller with a fixed, short horizon that safely and dynamically allocates tasks to agents.


 
\section{Hierarchical LMPC}\label{sec:lmpc}

We now introduce our proposed data-driven policy formulations and parameter update procedures. 
Each level of the hierarchical controller utilizes a data-driven MPC policy.
We adapt ideas from Learning Model Predictive Control (LMPC), an approach for iteratively improving on an initial provided feasible trajectory \cite{lmpc}. 
The technique relies on the realization that any state in a given feasible trajectory is a subset of the maximum controllable set to the task goal. 
Closed-loop data collected at each iteration is used to successively improve the construction of an MPC terminal set and terminal cost, resulting in iterative closed-loop performance improvement while maintaining feasibility guarantees.
Here we demonstrate how to adapt this idea for safe data-driven hierarchical control. 

As defined in (\ref{eq:arch-closedloop}), we denote with ${\bf{x}}^r$ the agent closed-loop trajectory during iteration $r$, and with 
${\bf{x}}^{H,r} = g_{\tau}({\bf{x}}^r)$ the corresponding sequence of high-level states.
At the beginning of each iteration $r$, the trajectories collected in the previous iterations are used to formulate data-driven policies $\pi^{H,r}$ and $\pi^{L,r}$ and their parameters $\mathcal{S}^{L,r}$, $\mathcal{S}^{H,r}$, $\hat{\theta}^r$, and $\hat{\Theta}^r$.

\subsection{Parameter Calculation}
We introduce two operators. The indexing operator $\mathcal{I}({\bf{x}}^r,i,j)$ returns the time range during which the agent travels from node $v_i$ to node $v_j$ in the trajectory ${\bf{x}}^r$. $\mathcal{I}({\bf{x}}^r,i,j,-1)$ returns the time step at which the agent reaches node $v_j$ after departing node $v_i$. If the agent does not travel from node $v_i$ to node $v_j$ in the trajectory ${\bf{x}}^r$, $\mathcal{I}({\bf{x}}^r,i,j)$ is empty. 

The capacity change operator $\Omega_l({\bf{x}}^r,i,j)$ calculates how much the agent's $l$th capacity state $c_l$ depleted while traveling from node $v_i$ to node $v_j$ in the trajectory ${\bf{x}}^r$. 

Before beginning iteration $r$, the capacity depletion estimate $\hat{\theta}^{r}$ is calculated as
\begin{align}\label{eq:thetaupdate}
    \hat{\theta}^{r}_{i,j,l} = \min_{p \in [0,r-1]} \Omega_l({\bf{x}}^{p},i,j),
\end{align}
where each $\hat{\theta}_{i,j,l}^{r} \in \hat{\theta}_{i,j}^{r}$ describes the smallest amount the $l$th capacity state depleted along edge $e_{i,j}$ in any previous recorded iteration. 
The bound estimate $\hat{\Theta}^{r}$ is updated as
\begin{align}\label{eq:bigthetaupdate}
    \hat{\Theta}^{r}_{i,j} = \{\theta \in \mathbb{R}^L ~|~ 0 \leq \theta_{l} \leq  \hat{\theta}_{i,j,l}^{r},~ l \in [1,L]\}.
\end{align}

At the start of iteration $r$, we define a high-level safe set
\begin{align} \label{eq:highlevelss-update}
    \mathcal{S}^{H,r} & = \bigcup_{p=0}^{r-1} s({\bf{x}}^{H,p}, \hat{\theta}^{p+1})
\end{align}
where the function $s$ is defined by Alg.~\ref{alg:cap}.
\begin{algorithm}
\caption{$s : {\bf{x}}^H, \hat{\theta} \rightarrow \tilde{\bf{x}}$}\label{alg:cap}
\begin{algorithmic}
\State ${\bf{n}} \gets \text{nodes in } {\bf{x}}^H,~~K \gets \text{length}({\bf{n}})$
\State $\bar{c}_{l,K} \gets {C}_l, ~~\bar{c}_{l,0} \gets 0, ~~~~\forall l \in [1,L]$
\For{$k = K-1:-1:1$}
\State $\bar{c}_{l,k} \gets \bar{c}_{l,k+1} + \hat{\theta}_{n_k, n_{k+1},l}~~~~\forall l \in [1,L]$
\State $\tilde{x}_k = (n_k, \bar{c}_k) $
\EndFor
\State $\tilde{x}_0 = (n_0, \bar{c}_0),~  \tilde{x}_K = (n_K, \bar{c}_K)$\\
\Return $\tilde{\bf{x}}$
\end{algorithmic}
\end{algorithm}
Given a node trajectory and capacity depletion estimate $\hat{\theta}$, Alg.~\ref{alg:cap} calculates the maximum capacity state values $\bar{c}_k$ the agent could have had at each node $n_k$ along the trajectory while still satisfying capacity state constraints according to depletion rates predicted by $\hat{\theta}$. 
Thus, the set $\mathcal{S}^{H,r}$ contains the sequences of nodes traversed in all previous iterations, along with corresponding worst-case capacity state trajectories that would have been feasible for the node sequence under the high-level controller's capacity depletion estimate $\hat{\theta}$ at that iteration. 
Note this implies that if $\hat{\theta}^r \leq \hat{\theta}^{r-1}$, then each state $x^H \in \mathcal{S}^{H,r}$ represents a node and capacity state from which there is guaranteed to exist a feasible path back to the depot according to $\hat{\theta}^r$. 
The size of $\mathcal{S}^{H,r}$ grows with each additional iteration $r$.

We define a value function $Q^{H,r}(\cdot)$ over $\mathcal{S}^{H,r}$, where 
\begin{align}\label{eq:highcosttogo}
    Q^{H,r}(x^H) = \begin{cases}
        \max_{(p,k) \in W^{r}(x^H)} V^H_n({\bf{x}}^{H,p}_{k:}) & x^H \in \mathcal{S}^{H,r} \\
        -\infty & \text{else}
    \end{cases}
\end{align}
where
 \begin{align*}
    W^{r}(x^H) = &\{(p,k)~:~ p \in [0,r-1],~ k \geq 0  \nonumber\\
    & ~~~~\text{ with } x^{H,p}_k = x; ~\text{for } x^{H,p}_k \in \mathcal{S}^{H,r} \}
\end{align*}   
which corresponds a state $x^H \in {\bf{x}}^{H,p}$ with the number of nodes the agent visited after being in state $x^H$ in iteration $p$.

Low-level safe sets ${\mathcal{S}}_{i,j}^{L,r}$ are stored for each edge $e_{i,j}$ in $G$, where
\begin{align}\label{eq:lowlevelss-update}
    \mathcal{S}_{i,j}^{L,r} &= x^{p\star}_{\mathcal{I}({\bf{x}}^{p\star}, i,j)} - (c^{p\star}_{\mathcal{I}({\bf{x}}^{p\star}, i,j,0)}, {\bf{0}}_{n-L}) \\
    p{\star}& = \argmax_{p \in [0,r-1]} ~~p \nonumber\\
    & ~~~~\text{s.t.}~~ \mathcal{I}({\bf{x}}^{p}, i,j) \text{ not empty}. \nonumber
\end{align}
At the beginning of iteration $r$, the safe set $\mathcal{S}_{i,j}^{L,r}$ contains the most recently recorded trajectory between node $v_i$ and node $v_j$. This need not correspond to iteration $r-1$, as only edges assigned by the high-level controller are traveled during an iteration. Note that for each state in $\mathcal{S}_{i,j}^{L,r}$, the stored capacity values have been shifted to begin at $0$ at the start of the edge trajectory. Non-capacity states are unaltered.


We refer to the low-level stage cost $h(x,j)$ introduced in (\ref{eq:lowleveliter}) and define a low-level value function $Q^{L,r}_{i,j}(\cdot)$: 
\begin{align}
    Q^{L,r}_{i,j}(x) &= \begin{cases}
         \sum_{s=t}^{\mathcal{I}({\bf{x}}^{p\star},i,j,-1)} h(x^{p\star}_s,j) & x \in \mathcal{S}^{L,r}_{i,j} \\
        + \infty & \text{else} 
    \end{cases}\nonumber \\
    p{\star}& = \argmax_{p \in [0,r-1]} ~~p \nonumber\\
    & ~~~~\text{s.t.}~~ \mathcal{I}({\bf{x}}^{p}, i,j) \text{ not empty}. \nonumber
\end{align}
The function $Q^{L,r}_{i,j}(\cdot)$ assigns each state in $\mathcal{S}^{L,r}_{i,j}$ the minimum cost to reach the node $v_j$ along the trajectory stored in $\mathcal{S}^{L,r}_{i,j}$.

\subsection{Policy Formulation}\label{ssec:lmpcpolicy}
At each iteration $r\geq1$, the data-driven policies $\pi^{H,r}$ and $\pi^{L,r}$ are formulated using $\mathcal{S}^{L,r}$, $\mathcal{S}^{H,r}$, $\hat{\theta}^r$, and $\hat{\Theta}^r$ as follows.

\subsubsection{High-level policy $\pi^{H,r}$}
At event $k$ of the $r$th iteration, the high-level controller searches for the path of $N$ edges beginning at the current state $x^{H,r}_k = (n^{r}_{k}, c^r_k)$ that maximizes the number of unique visited nodes.
We introduce the optimal control problem by first listing the necessary constraints.

The binary variable $u \in \{0,1\}^{|V| \times |V|}$ describes how the agent will be routed, with $u_{i,j}=1$ indicating that the agent shall travel from node $v_i$ to node $v_j$.
The constraints 
\begin{subequations}\label{eq:lowlevellmpcconstraintA}
\begin{align}
& \sum_{j\in \mathcal{V}} u_{n^{r}_{k},j} = 1 \label{eq:firsthighlevelconstraint}\\
& c_{n^{r}_{k},j,l} = c^{r}_{l,k} ~~~~~~~~~~~~~~ \forall j \in \mathcal{V}, ~l \in [1,L] \label{eq:secondhighlevelconstraint} \\
& u_{i,j} = 0 ~~~~~~~~~~~~~~~~~~~~ \forall i \in \mathcal{V}, ~j \notin N(i) \\
& c_{i,j,l} = 0 ~~~~~~ \forall i \in \mathcal{V}, ~j \notin N(i), ~l \in [1,L]
\end{align}
\end{subequations}
ensure the path starts at node $n^{r}_{k}$, and initialize the values of the capacity state accumulation variables $c_{i,j,l}$ to the current capacity states at event $k$. Each variable $c_{i,j,l}$ tracks the value of the $l$th capacity state $c_l$ as the agent departs node $v_i$ towards node $v_j$. Paths can only be planned between neighboring nodes. 
The constraints
\begin{subequations}
    \begin{align}
    & \sum_{i \in \mathcal{V}} u_{i,j} \leq 1 ~~~~~~ \forall j \in \mathcal{V}\setminus D\\
    & \sum_{j \in \mathcal{V}} u_{i,j} \leq 1 ~~~~~~ \forall i \in \mathcal{V}\setminus D \\
    & \sum_{i \in \mathcal{V}} u_{i,j} \geq \sum_{i \in \mathcal{V}} u_{j,i} ~~~~ \forall j \in \mathcal{V} \label{eq:nodesharing}
\end{align}
\end{subequations}
ensure that a single path is formed, and that no paths begin from any node other than $n^{r}_{k}$. 
The constraints
\begin{subequations}
    \begin{align}
    & \sum_{i \in \mathcal{V}} c_{i,j,l} + \hat{\theta}_{i,j,l}^{r} = \sum_{i \in \mathcal{V}} c_{j,i,l} ~ \forall j \in \mathcal{V} \setminus D, ~ l = [1,L] \label{eq:37}\\
    & c_{i,j,l} \geq \sum_{j \in \mathcal{V}} \hat{\theta}^r_{j,i,l} u_{j,i} ~~ \forall i \in \mathcal{V} \setminus D, ~j \in \mathcal{V}, ~l \in [1,L]\\
    & c_{i,j,l} \leq ({C}_l - \hat{\theta}_{i,j,l}^{r}) u_{i,j}~~~ \forall i \in \mathcal{V}, ~j \in \mathcal{V},~ l \in [1,L] \label{eq:39}
\end{align}
\end{subequations}
track the evolution of the capacity state accumulation variables along the planned route, and ensure that they satisfy all capacity state constraints $\mathcal{C}$ (\ref{eq:capacitydynamics}). 
The constraints 
\begin{subequations}
\begin{align}
    & \sum_{i,j \in \mathcal{V}} u_{i,j} \leq N \\
    & \sum_{j \in {\bf{n}}^r_{:k} \setminus D} u_{i,j} = 0~~~~~~ \forall i \in \mathcal{V}\label{eq:prevpathconstraint}
\end{align}    
\end{subequations}
ensure that the path is at most length $N$ does not intersect with any nodes traversed thus far during this iteration (stored in the set ${\bf{n}}^r_{:k}$). 
Lastly, the constraints
\begin{subequations}\label{eq:lowlevellmpcconstraintF}
\begin{align}
    & \sum_{n \in {\mathcal{S}}^{H, r}} \bigg( \sum_{i\in \mathcal{V}} u_{i,n} - \sum_{i\in\mathcal{V}} u_{n,i}\bigg) = 1 \label{eq:termset}\\
    & \sum_{n \in {\mathcal{S}}^{H, r}} \bigg( \sum_{i\in \mathcal{V}} u_{i,n} - \sum_{i\in\mathcal{V}} u_{n,i}\bigg)j = \bar{n} \label{eq:laststate}\\
    & \sum_{i \in \mathcal{V}} c_{\bar{n},i,l} = \bar{c}_l ~~~~~~~~~~~~~~~~~~~~~~~~~~~ \forall l \in [1,L]\label{eq:lastc}\\
    & \bar{c} \leq \tilde{c} \label{eq:cinequality}\\
    & (\bar{n}, \tilde{c}) \in \mathcal{S}^{H, r} \label{eq:lmpcinequality}\\
    & \sum_{i \in \mathcal{V}} u_{i,j} = 0 ~~~~~~~~~~~~~~~~~~~~~~~~~~~ \forall j \in P(\bar{n}, \tilde{c}) \label{eq:lastlmpcconstraint}
\end{align}    
\end{subequations}
impose a terminal constraint on the planned path. Constraint (\ref{eq:termset}), along with (\ref{eq:nodesharing}), ensures that the planned path ends in a node in ${\mathcal{S}}^{H, r}$, a node the agent has visited in a previous task iteration. This node is labeled as $\bar{n}$ in (\ref{eq:laststate}); the corresponding vector of planned capacity states at node $\bar{n}$ is labeled as $\bar{c}$ in (\ref{eq:lastc}). 
Constraints (\ref{eq:cinequality}) and (\ref{eq:lmpcinequality}) enforce that the capacity state $\bar{c}$ planned at node $\bar{n}$ is less than the capacity state $\tilde{c}$ associated with $\bar{n}$ in the safe set (recall the formulation choice of capacity states being initialized at $0$ and increasing throughout the task).
Finally, constraint (\ref{eq:lastlmpcconstraint}) ensures that the planned path does not include any nodes visited after the terminal node in a previous iteration, a set denoted $P(\bar{n}, \tilde{c})$.

We can now formulate the optimal control problem. At event $k$ of iteration $r$, the high-level controller solves 
\begin{align}
    u^{H,r}_k & = \pi^{H,r}(x_k^{H,r}, \hat{\theta}^r, \hat{\Theta}^r) \label{eq:highlevellmpc}\\
    & = \argmax_{u, c} \sum_{i, j \in \mathcal{V}} u_{i,j} + Q^{H,r}((\bar{n}, \tilde{c})) \nonumber \\
    & ~~~~~~~~~~~ \text{s.t.} ~~ (\ref{eq:lowlevellmpcconstraintA}) - (\ref{eq:lowlevellmpcconstraintF}). \nonumber
\end{align}
The objective function maximizes the sum of two terms: the total number of unique nodes visited during the planned $N$-step path and how many additional nodes the agent visited beyond the planned terminal node in previous iterations. 
The first node in the planned $N$-step path is used as a task assignment for the low-level controller. 
Specifically, the next node assignment passed to the low-level controller is
\begin{align}
    n^{r,\star}_{k+1|k} = \sum_{j \in \mathcal{V}} u^{H,r}_{k, n_k^r, j} j.
\end{align}
When the agent reaches the reference node, (\ref{eq:highlevellmpc}) is solved again. This procedure is repeated until the terminal constraint (\ref{eq:lowlevellmpcconstraintF}) eventually forces the agent back to the depot.  

This formulation (\ref{eq:highlevellmpc}) adapts the LMPC approach introduced in \cite{lmpc} to the discrete task assignment problem. 
Note that because (\ref{eq:highlevellmpc}) is solved each time the agent reaches a new node, it is possible a more productive route can be found at event $k+1$ than at event $k$. 
If not, by enforcing the terminal constraint (\ref{eq:lowlevellmpcconstraintF}) on the planned path, the controller ensures that the agent will still be able to return to the depot from the last planned node without depleting capacity states so long as $\theta \leq \hat{\theta}^r \leq \hat{\theta}^{r-1}$. Such a path exists and is stored in $\mathcal{S}^{H,r}$.
Note that the satisfaction of the condition $\theta \leq \hat{\theta}^r \leq \hat{\theta}^{r-1}$ depends on the parameter update procedure for $\hat{\theta}^r$ and the formulation of the low-level controller; Sec.~\ref{sssec:lowlevel} proposes a formulation that accomplishes this.

\begin{remark}
As written here, problem (\ref{eq:highlevellmpc}) is solved over the full graph $G$. An alternate method is to solve (\ref{eq:highlevellmpc}) over a local graph $G'(n^r_k, N) \subseteq G$ containing only nodes within a radius $N$ of node $n^r_k$. This local graph would be updated dynamically at each event $k$.
\end{remark}


\subsubsection{Low-level policy $\pi^{L,r}$}\label{sssec:lowlevel}
At time step $t$ while moving from node $n^r_k = i$ to node $n^{r,\star}_{k+1|k}= j$ during iteration $r$, the actuation input $u^r_t$ to apply at state $x^r_t$ is calculated as
\begin{subequations}\label{eq:lowlevellmpc}
\begin{align}
u^{\star} = \argmin_{u}~~  & \sum_{k=t}^{N-1} h(x_{k|t}, u_{k|t}) + Q^{L,r}_{i,j}((\bar{c}, \bar{\chi})) \\
    s.t.~~ & x_{k+1|t} = f(x_{k|t},~u_{k|t}) \label{eq:lowleveldynamics} \\
    & x_{k|t} \in \mathcal{X} ~~~~~~~~~~~~ \forall k \in [t, t+N]\label{eq:lowlevelstate}\\
    &u_{k|t} \in \mathcal{U}, ~~~~~~ \forall k \in [t, t+N-1] \label{eq:lowlevelinput}\\
    & x_{t|t} = x^r_t \\
    & c_{t+N|t} - c_k^{r} \leq \bar{c}
    \label{eq:lowlevelinequalityterm}\\
    & \chi_{t+N|t} = \bar{\chi} \label{eq:lowlevelchi}\\
    & (\bar{c}, \bar{\chi}) \in \mathcal{S}^{L, r}_{i,j}
    \label{eq:lowlevelterm}\\
    u^r_t = \pi^{L, r}(x^r_t,i&,j) = u^{\star}_{t|t}. 
\end{align}
\end{subequations}
The controller searches at each time step $t$ for a feasible $N$-step trajectory from the current state $x^r_t$ to a state $x_{t+N|t} = (c_{t+N|t}, \chi_{t+N|t})$, where $x_{t+N|t}$ relates to a state $x_{t\star} = (\bar{c}, \bar{\chi})$ in the safe set $\mathcal{S}^{L,r}_{i,j}$ in two ways:
\begin{enumerate}
    \item the non-capacity states $\chi_{t+N|t}$ are the same as those of state $x_{t\star}$ (\ref{eq:lowlevelchi}), and
    \item the capacity states $c_{t+N|t}$ have depleted less since the beginning of the edge trajectory (where capacity states had value $c^r_k$) than those of $x_{t\star}\in \mathcal{S}^{L,r}_{i,j}$ (\ref{eq:lowlevelinequalityterm}).
\end{enumerate}
The objective function minimizes the sum of two terms: the stage cost (\ref{eq:lowlevelh}) accumulated over the planned $N$-step trajectory, and the remaining stage cost accumulated on the trajectory from $x_{t\star}$ back to the depot in a previous iteration. At time step $t$, the first planned input is applied and a new optimal control problem (\ref{eq:lowlevellmpc}) is solved at time step $t+1$.

It follows from \cite{lmpc} and construction of (\ref{eq:lowlevelh}) and (\ref{eq:lowlevellmpc}) that there exists a time step $t^{\star}$ while the low-level policy (\ref{eq:lowlevellmpc}) is being applied such that $x_{t^{\star}+N|t^{\star}} \in \mathcal{N}_j$, i.e. the planned $N$-step path reaches the goal node $v_j$.
For all time steps beyond $t^{\star}$, until the goal node is reached, the low-level controller will apply the corresponding input as planned at $t^{\star}$: 
\begin{align}\label{eq:lowlevellmpc-rec}
    &\forall t \in [t^{\star}, t^{\star}+N-1]:~ u_t^r = \pi^{L,r}(x_t^r,i,j) = u^{\star}_{t|t^{\star}}
\end{align}

Note that the terminal constraints (\ref{eq:lowlevelterm}) and (\ref{eq:lowlevelinequalityterm}) force the planned $N$-step trajectory to end in a state $x_{t+N|t}$ with non-capacity state values from which, in a previous iteration, the agent was able to reach the reference node $v_j$ despite having more depleted capacity states at that point. 
This ensures that \textit{a)} the planned trajectory ends in a state from which the agent will be able to reach node $v_j$ while satisfying state and input constraints, and \textit{b)} the agent will reach the next task node with capacity value levels required to guarantee existence of a feasible path back to the depot. 
This guarantee follows from the agent having achieved it in a previously recorded trajectory (stored in $\mathcal{S}^{L,r}_{i,j}$).

\begin{remark}
   The low-level MPC horizon and high-level MPC horizon need not be the same length $N$.
\end{remark}

\subsection{Hierarchical LMPC Architecture}\label{ssec:initialdataset}
The high-level and and low-level LMPC policies (\ref{eq:highlevellmpc}) and (\ref{eq:lowlevellmpc}) are assembled into a hierarchical control architecture as described in Alg.~\ref{alg:hierarchy}. 
Before beginning iteration $r=1$, we assume the availability of a dataset containing:
\begin{enumerate}
    \item First, an initial, feasible agent trajectory ${\bf{x}}^0$ beginning and ending at the depot. This trajectory is sub-optimal with respect to (\ref{eq:mainproblem}); it may have been possible for the agent to complete more tasks while satisfying capacity constraints. The trajectory ${\bf{x}}^0$ is used to initialize the high-level safe set $\mathcal{S}^{H,1}$ as well as the initial parameter estimates $\hat{\theta}^1$ and $\hat{\Theta}^1$.
    \item Second, a set of feasible agent trajectories along each edge $e_{i,j}$ in $G$, beginning in $\mathcal{N}_i$ and ending in $\mathcal{N}_j$. Each of these trajectories satisfies the constraints in (\ref{eq:lowleveliter}), but they may be sub-optimal with respect to (\ref{eq:lowleveliter}). These trajectories are used to initialize the low-level safe sets  $\mathcal{S}^{L,1}_{i,j}$ for each edge.
\end{enumerate}
These initial datasets may contain feasible trajectories corresponding to very conservative controllers. 

As more data is collected with further iterations, the terminal constraints and terminal costs used in $\pi^H$ and $\pi^L$ are updated to reflect improved estimates of the agent's capacity usage during task execution. 
Control performance improvement of the hierarchical control architecture comes from two separate but coordinated phenomena.
First, low-level performance improvements result in new edge trajectories that minimize control cost with respect to (\ref{eq:lowlevelh}) while reducing capacity consumption. When trajectories are discovered along an edge that reduce capacity consumption, the model estimate $\hat{\theta}$ improves, and the high-level model uncertainty bound shrinks. 
Second, the high-level controller utilizes these improved bounds to plan more productive high-level routes on the graph $G$. 
As a result, control performance is improved both separately at the low-level and high-level, and jointly with respect to (\ref{eq:hierarchicalcost}). 

The proposed architecture represents a novel, computationally efficient approach for combining tools from hierarchical control and data-driven iterative learning control to dynamically assign tasks and plan routes in a way that ensures constraint satisfaction despite using short-horizon predictions. 
Next, we prove that the high-level and and low-level LMPC policies (\ref{eq:highlevellmpc}) and (\ref{eq:lowlevellmpc}) assembled into a hierarchical control architecture as described in Alg.~\ref{alg:hierarchy} and updated according to (\ref{eq:thetaupdate}), (\ref{eq:highlevelss-update}), (\ref{eq:lowlevelss-update}) satisfy the properties of iterative feasibility and iterative performance improvement.


\subsection{Multi-Agent Formulation}\label{ssec:multiagent}
For the sake of notational simplicity, the hierarchical LMPC architecture has been presented for a single agent. However, the method can be straightforwardly applied to fleets of $M$ agents with slight modifications to the formulation of the high-level LMPC (\ref{eq:highlevellmpc}). 
In the multi-agent case, a centralized high-level LMPC (\ref{eq:highlevellmpc}) plans $M$ routes, each of length $N$, for $M$ agents. Then, $M$ decentralized LMPCs in the form of (\ref{eq:lowlevellmpc}) will guide each agent to their assigned reference nodes. 

The multi-agent high-level state of the system will concatenate the $M$ nodes and capacity states of the agents. Therefore, each point in the high-level safe set $\mathcal{S}^{H,r}$ will consist of $M$ nodes and capacity state values the agents have simultaneously held at a previous iteration. 

The formulation in (\ref{eq:highlevellmpc}) must be adapted accordingly to plan $M$ satisfactory paths. 
Specifically, constraints (\ref{eq:firsthighlevelconstraint}), (\ref{eq:secondhighlevelconstraint}) are changed to plan $M$ routes, each beginning at the current node and capacity level of an agent $m \in M$.
Additionally, the terminal constraints (\ref{eq:lowlevellmpcconstraintF}) must be changed to enforce that the $M$ planned paths jointly end in nodes and capacity state values in the higher-dimension safe set $\mathcal{S}^{H,r}$.

All properties proven for Alg.~\ref{alg:hierarchy} in Sec.~\ref{ssec:properties} hold in the multi-agent case. 

\begin{remark}
    If all $M$ agents have the same dynamics (\ref{eq:agentdyn}), the low-level safe sets $\mathcal{S}^{L}$ and capacity depletion estimate $\hat{\theta}$ can be shared amongst agents. If agents have different dynamics, the methods outlined here can still be applied, but low-level safe sets and capacity depletion estimates will need to be specific for each agent. For example, at the beginning of iteration $r$, the set $\mathcal{S}^{L,r,m}_{i,j}$ will contain the trajectory most recently completed by agent $m$ along edge $e_{i,j}$. 
    Similarly, the cost estimate $\hat{\theta}^{r,m}_{i,j,l}$ will contain the smallest amount the capacity state depleted when agent $m$ traversed the edge $e_{i,j}$ during a previous iteration.
\end{remark}

\subsection{Properties}\label{ssec:properties}
We prove that Alg.~\ref{alg:hierarchy} with (\ref{eq:highlevellmpc}) and (\ref{eq:lowlevellmpc}) satisfies iterative feasibility and iterative performance improvement as defined in Sec.~\ref{sec:iterative}.
We first show that an agent (\ref{eq:agentdyn}) in closed-loop with the low-level policy (\ref{eq:lowlevellmpc}) iteratively traversing a single edge $e_{i,j}$ exhibits reduced capacity consumption with each subsequent iteration. 

\begin{theorem}\label{thm:llc}
Consider system (\ref{eq:agentdyn}) controlled by the low-level policy (\ref{eq:lowlevellmpc}) during consecutive traversals of a single edge $e_{i,j}$ in $G$. Let $\mathcal{S}^{L,r}_{i,j}$ be the safe set corresponding to the edge $e_{i,j}$ at the start of iteration $r$ as defined in (\ref{eq:lowlevelss-update}). Then, the optimal control problem (\ref{eq:lowlevellmpc}) solved by the low-level policy will be feasible for all time steps $ 0 \leq t \leq tf$ and iterations $r \geq 1$ from any initial state $x^{r}_{t0} = (c^{r}_{t0}, \bar{\chi}_i)$ where $c^{r}_{l,t0} \in [0, {C}_l - \hat{\theta}^r_{i,j,l}]$, $t0 = \mathcal{I}({\bf{x}}^r,i,j,0)$, and $tf = \mathcal{I}({\bf{x}}^r,i,j,-1)$.
Furthermore, the capacity states drain no more during traversal $r$ than traversal $(r-1)$, so that 
\begin{align*}
    \Omega_l({\bf{x}}^r,i,j) \leq  \Omega_l({\bf{x}}^{r-1},i,j).
\end{align*}
\end{theorem}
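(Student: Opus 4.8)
The plan is to prove both assertions of Theorem~\ref{thm:llc} by a single induction on the iteration index $r$, because the two claims are mutually dependent: the recursive-feasibility construction relies on the stored trajectory in $\mathcal{S}^{L,r}_{i,j}$ being the one of \emph{minimal} recorded depletion, and this is exactly what the improvement inequality from the preceding iterations supplies. Concretely, I would carry along the joint induction hypothesis that, for every iteration up to $r-1$, problem~(\ref{eq:lowlevellmpc}) is feasible for all $t \in [t0,tf]$ and $\Omega_l({\bf{x}}^{p},i,j) \leq \Omega_l({\bf{x}}^{p-1},i,j)$ for each $l$. The first task in the inductive step is to exploit this monotonicity: chaining the inequalities gives $\Omega_l({\bf{x}}^{r-1},i,j) \leq \cdots \leq \Omega_l({\bf{x}}^{0},i,j)$, so by the update rule~(\ref{eq:thetaupdate}) we may identify $\hat{\theta}^{r}_{i,j,l} = \min_{p\leq r-1}\Omega_l({\bf{x}}^{p},i,j) = \Omega_l({\bf{x}}^{r-1},i,j)$. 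In other words, the most recently recorded trajectory, which is precisely the one placed in $\mathcal{S}^{L,r}_{i,j}$ by~(\ref{eq:lowlevelss-update}), attains the minimal total depletion $\hat{\theta}^{r}_{i,j,l}$. The base case $r=1$ is immediate, since the initial dataset of Sec.~\ref{ssec:initialdataset} supplies a feasible edge trajectory in $\mathcal{S}^{L,1}_{i,j}$ and $\hat{\theta}^{1}_{i,j,l}=\Omega_l({\bf{x}}^{0},i,j)$ trivially.

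To establish feasibility at iteration $r$ I would first exhibit a feasible candidate at the initial time $t0$ (where $c^r_k = c^r_{t0}$) and then argue recursive feasibility. At $t0$, applying the first $N$ inputs of the stored trajectory reproduces its non-capacity states exactly, since $g_{\chi}$ in~(\ref{eq:noncapacitydynamics}) depends only on $(\chi,u)$ and the edge is entered at $\bar{\chi}_i$; by the additive dynamics~(\ref{eq:capacitydynamics}) the accumulated depletion then equals the stored shifted capacity, so the terminal constraints~(\ref{eq:lowlevelinequalityterm})--(\ref{eq:lowlevelterm}) hold with equality. The state constraint $c \in \mathcal{C}$ is verified because, along the stored trajectory, each shifted capacity is at most the total $\Omega_l({\bf{x}}^{r-1},i,j)$ (monotonicity from $g_c \geq 0$), so the absolute capacity is bounded by $c^r_{l,t0} + \Omega_l({\bf{x}}^{r-1},i,j) \leq (C_l - \hat{\theta}^{r}_{i,j,l}) + \hat{\theta}^{r}_{i,j,l} = C_l$, using the initial-condition bound and the identification established above.

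For recursive feasibility at $t+1$ I would take the tail of the optimal solution computed at $t$ and append the single stored input that advances the terminal safe-set point $(\bar{c},\bar{\chi})$ to its successor along the stored trajectory. Because the predicted terminal non-capacity state matches $\bar{\chi}$, the appended step lands exactly on the next stored non-capacity value, keeping the terminal state in $\mathcal{S}^{L,r}_{i,j}$; the depletion inequality~(\ref{eq:lowlevelinequalityterm}) is preserved because the common increment $g_c(\bar{\chi},u)$ is added to both the planned and the stored depletions, so the strict ordering carries over to the successor. The appended input satisfies $u \in \mathcal{U}$ and the successor state satisfies $x \in \mathcal{X}$ because the stored trajectory was itself feasible. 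This yields feasibility for all $t \in [t0,tf]$.

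Performance improvement then follows almost directly. Invoking the fact (established via \cite{lmpc} together with the structure of $h$ in~(\ref{eq:lowlevelh})) that some $t^{\star}$ produces a planned $N$-step trajectory reaching $\mathcal{N}_j$, at that step the terminal non-capacity state must equal $\bar{\chi}_j$ from~(\ref{eq:nodejdef}), whose only preimage in $\mathcal{S}^{L,r}_{i,j}$ is the endpoint of the stored trajectory with shifted capacity $\Omega_l({\bf{x}}^{r-1},i,j)$. Constraint~(\ref{eq:lowlevelinequalityterm}) then gives $c_{t^{\star}+N|t^{\star}} - c^r_{t0} \leq \Omega_l({\bf{x}}^{r-1},i,j)$, and since the controller applies this plan to completion by~(\ref{eq:lowlevellmpc-rec}), the left-hand side is exactly the realized depletion $\Omega_l({\bf{x}}^{r},i,j)$, which proves the inequality and closes the induction. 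I expect the main obstacle to be the recursive-feasibility construction, specifically checking that appending a stored input simultaneously preserves the depletion inequality~(\ref{eq:lowlevelinequalityterm}) and membership of the terminal state in $\mathcal{S}^{L,r}_{i,j}$; this is where the sign-definite additive capacity dynamics ($g_c \geq 0$) and the exact matching of non-capacity states are indispensable. Equally delicate, and easy to overlook, is the coupling between the two claims, since feasibility cannot be decoupled from improvement without the identification $\hat{\theta}^{r}_{i,j,l} = \Omega_l({\bf{x}}^{r-1},i,j)$.
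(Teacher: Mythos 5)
Your proof is correct and follows essentially the same route as the paper's: a shifted copy of the stored trajectory certifies feasibility at $t0$ (using $g_c \geq 0$ and $c^{r}_{l,t0} \leq C_l - \hat{\theta}^r_{i,j,l}$), recursive feasibility comes from the tail of the optimal solution plus one appended stored input, with the common increment $g_c(\bar{\chi},u)$ preserving (\ref{eq:lowlevelinequalityterm}), and the depletion bound follows from the terminal constraints at the step $t^{\star}$ whose plan reaches $\mathcal{N}_j$ and is executed to completion via (\ref{eq:lowlevellmpc-rec}). The one point where you go beyond the paper is in making the induction on $r$ explicitly joint: the paper asserts $\hat{\theta}^{r}_{i,j,l} = \Omega_l({\bf{x}}^{r-1},i,j)$ ``by definition,'' which in fact requires exactly the chained improvement inequalities $\Omega_l({\bf{x}}^{r-1},i,j) \leq \cdots \leq \Omega_l({\bf{x}}^{0},i,j)$ you supply, so your version is the more careful rendering of the same argument.
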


\begin{proof}
By construction, $\mathcal{S}^{L, r}_{i,j}$ is non-empty $\forall r \geq 1$. 
There exists a feasible trajectory ${\bf{x}}^{r-1}_{\mathcal{I}({\bf{x}}^{r-1},i,j)} \in \mathcal{S}^{L, r}_{i,j}$ with related input sequence ${\bf{u}}^{r-1}_{\mathcal{I}({\bf{x}}^{r-1},i,j)} \in \mathcal{U}$, where
\begin{align*}
    {\bf{x}}^{r-1}_{\mathcal{I}({\bf{x}}^{r-1},i,j)} = (\tilde{{\bf{c}}}^{r-1}_{\mathcal{I}({\bf{x}}^{r-1},i,j)},  {\bf{\chi}}^{r-1}_{\mathcal{I}({\bf{x}}^{r-1},i,j)}).
\end{align*}
We denote $tr = \mathcal{I}({\bf{x}}^{r-1},i,j,0)$ the time at the beginning of edge $e_{i,j}$ traversal in iteration $(r-1)$.
At time $t=t0$ of iteration $r$, the $N$-step trajectory
\begin{subequations}\label{eq:chic}
 \begin{align}
& {\bf{x}}^{r}_{t0:t0+N | t0} = ({\bf{c}}^{r}_{t0:t0+N | t0}, ~{\bf{\chi}}^{r}_{t0:t0+N | t0}) \\
    &{\bf{c}}^{r}_{t0:t0+N | t0} = \tilde{{\bf{c}}}^{r-1}_{tr : tr +N} + c^{r}_{t0}\\
    &{\bf{\chi}}^{r}_{t0:t0+N | t0} = {\bf{\chi}}^{r-1}_{tr : tr+N} 
\end{align}   
\end{subequations}
and related input sequence
\begin{align}\label{eq:chicu}
    [u_{tr}^{r-1}, u_{tr+1}^{r-1}, \dots, u_{tr+N-1}^{r-1}]
\end{align}
corresponding to the inputs applied by the low-level policy (\ref{eq:lowlevellmpc}) at iteration $(r-1)$
satisfy input and state constraints (\ref{eq:lowlevelstate}), (\ref{eq:lowlevelinput}), (\ref{eq:lowlevelinequalityterm}), (\ref{eq:lowlevelterm}). 

To verify this, we first note that it follows from (\ref{eq:capacitydynamics}) and $g_c \geq 0$ that $c^{r-1}_{\mathcal{I}({\bf{x}}^{r-1},i,j,-1)} = \max_{t \in \mathcal{I}({\bf{x}}^{r-1},i,j)} c^{r-1}_t$. 
By definition (\ref{eq:thetaupdate}) we have that
\begin{align*}
    \hat{\theta}^{r}_{i,j,l} = \Omega_l({\bf{x}}^{r-1},i,j) = \tilde{c}^{r-1}_{\mathcal{I}({\bf{x}}^{r-1},i,j,-1),l}.
\end{align*}
Therefore, $\tilde{{\bf{c}}}^{r-1}_{\mathcal{I}({\bf{x}}^{r-1},i,j)} \in \mathcal{C}$ implies that $\tilde{\bf{c}}^{r-1,i,j}_{tr:tr+N} + c^r_{t0} \in \mathcal{C}$ for any $c^r_{t0,l} \in [0, {C}_l - \hat{\theta}^r_{i,j,l}]$.
It follows with the additional separability of the capacity and non-capacity state dynamics (\ref{eq:capacitydynamics}), (\ref{eq:noncapacitydynamics}) that (\ref{eq:chic}) results from applying the feasible actuation inputs in ($\ref{eq:chicu}$), and (\ref{eq:lowlevelstate}) is satisfied. 
Thus, (\ref{eq:chic}), (\ref{eq:chicu}) is a feasible solution for (\ref{eq:lowlevellmpc}) at time $t=0$ of iteration $r$. 

Assume that at time $t \in \mathcal{I}({\bf{x}}^{r},i,j)$ of iteration $r$, the optimal control problem (\ref{eq:lowlevellmpc}) is feasible, and let ${\bf{x}}^{r,\star}_{t:t+N|t}$ and ${\bf{u}}^{r,\star}_{t:t+N-1|t}$ be the corresponding optimal trajectory and input sequence. The terminal constraints (\ref{eq:lowlevelinequalityterm}), (\ref{eq:lowlevelterm}) dictate that $\exists t^{\star} \in \mathcal{I}({\bf{x}}^{r-1},i,j)$ such that 
\begin{align}
    \chi^{r,\star}_{t+N|t} &= \chi^{r-1}_{t^{\star}}, ~
    c^{r,\star}_{t+N|t} - c^{r}_{t0} \leq \tilde{c}^{r-1}_{t^{\star}} \nonumber
\end{align}
where $(\tilde{c}^{r-1}_{t^{\star}}, \chi^{r-1}_{t^{\star}}) \in \mathcal{S}^{L, r}_{i,j}$.
As the state update in (\ref{eq:agentdynamics}) and (\ref{eq:lowleveldynamics}) are assumed identical we have
\begin{align}
    x^{r}_{t+1} = x^{r, \star}_{t+1|t}.\nonumber
\end{align}
At time $t+1$ of iteration $r$, the input sequence 
\begin{align}\label{eq:aa}
    [u^{r, \star}_{t+1|t}, u^{r,\star}_{t+2|t}, \dots, u^{r,\star}_{t+N|t}, u^{r-1}_{t^{\star}}]
\end{align}
results in the non-capacity and capacity state trajectories
\begin{subequations}\label{eq:bb}
  \begin{align}
    & [\chi^{r, \star}_{t+1|t}, \chi^{r, \star}_{t+2|t}, \dots, \chi^{r, \star}_{t+N-1|t}, \chi^{r-1}_{t^{\star}}, \chi^{r-1}_{t^{\star}+1}] \\
    & [c^{r, \star}_{t+1|t}, c^{r, \star}_{t+2|t}, \dots,   c^{r, \star}_{t+N-1|t},  \nonumber \\
    & ~~~~~~~~~~~~~~~c^{r, \star}_{t+N|t}, c^{r, \star}_{t+N|t} + g_c(\chi^{r-1}_{t^{\star}}, u^{r-1}_{t^{\star}})]
\end{align}  
\end{subequations}
where $c^{r, \star}_{t+N|t} - c^{r}_{t0} \leq \tilde{c}^{r-1}_{t^{\star}}$ and it follows from (\ref{eq:capacitydynamics}) that $c^{r,\star}_{t+N|t} - c^{r}_{t0} + g_c(\chi^{r-1}_{t^{\star}}, u^{r}_{t^{\star}}) \leq \tilde{c}^{r-1}_{t^{\star}+1}$, where $(\tilde{c}^{r-1}_{t^{\star}+1}, \chi^{r-1}_{t^{\star}+1}) \in \mathcal{S}^{L,r}_{i,j}$. 
Therefore, (\ref{eq:aa}) and (\ref{eq:bb}) are a feasible solution for the optimal control problem (\ref{eq:lowlevellmpc}) at time $t+1$ of iteration $r$. 

We have shown that at iteration $r \geq 1$, \textit{i)} the low-level policy is feasible at time $t=0$, and \textit{ii)} if the low-level policy is feasible at time $t$, then it is feasible at time $t+1$. We conclude by induction that the low-level policy (\ref{eq:lowlevellmpc}), (\ref{eq:lowlevellmpc-rec}) will result in a feasible trajectory for all iterations $r \geq 1$ from an initial state $x^{r}_{t0} = (c^{r}_{t0}, \chi^{r}_{t0})$ where $c^{r}_{t0,l} \in [0, {C}_l - \hat{\theta}^r_{i,j,l}]$ and $\chi^{r}_{t0} = \bar{\chi}_i$.

There exists a time step $t^{\star\star} \in \mathcal{I}({\bf{x}}^r,i,j)$ such that
\begin{subequations}\label{eq:cc}
   \begin{align}
    &\chi^{r, \star}_{t^{\star\star}+N|t^{\star\star}} = \chi^{r-1}_{\mathcal{I}({\bf{x}}^{r-1},i,j,-1)}\\
    &c^{r, \star}_{t^{\star\star}+N|t^{\star\star}} - c^{r}_{t0} \leq c^{r-1}_{\mathcal{I}({\bf{x}}^{r-1},i,j,-1)} - c^{r-1}_{\mathcal{I}({\bf{x}}^{r-1},i,j,0)}.
\end{align} 
\end{subequations}
Applying the low-level policy (\ref{eq:lowlevellmpc-rec}) beyond time $t=t^{\star\star}$ results in the closed-loop trajectory at iteration $r$
\begin{align}
    & [\chi^{r}_{t0}, \chi^{r}_{t0+1}, \dots, \chi^{r, \star}_{t^{\star\star}|t^{\star\star}}, \chi^{r, \star}_{t^{\star\star}+1|t^{\star\star}}, \dots, \chi^{r, \star}_{t^{\star\star}+N|t^{\star\star}}] \nonumber\\
    & [c^{r}_{t0}, c^{r}_{t0+1}, \dots, c^{r, \star}_{t^{\star\star}|t^{\star\star}}, c^{r, \star}_{t^{\star\star}+1|t^{\star\star}}, \dots, c^{r, \star}_{t^{\star\star}+N|t^{\star\star}}], \nonumber
\end{align}
where $(c^{r, \star}_{t^{\star\star}+N|t^{\star\star}}, \chi^{r, \star}_{t^{\star\star}+N|t^{\star\star}}) \in \mathcal{N}_j$ and the iteration has concluded.
Therefore $\mathcal{I}({\bf{x}}^r,i,j) = [t0, tf]$, where $tf = t^{\star\star}+N$.
From (\ref{eq:cc}), we conclude that for any successive iterations $r-1$ and $r$, we have  
\begin{align}
    c^{r}_{tf} - c^{r}_{t0} \leq  c^{r-1}_{\mathcal{I}({\bf{x}}^{r-1},i,j,-1) } - c^{r-1}_{\mathcal{I}({\bf{x}}^{r-1},i,j,0) } \nonumber
\end{align}
and therefore
\begin{align}\label{eq:dfa}
    \Omega_l({\bf{x}}^r,i,j) \leq  \Omega_l({\bf{x}}^{r-1},i,j).
\end{align}
We have shown that at iteration $r \geq 1$ of a system (\ref{eq:agentdyn}) controlled by the low-level policy (\ref{eq:lowlevellmpc}), (\ref{eq:lowlevellmpc}) iteratively traversing a single edge $e_{i,j}$, the capacity states drain no more than at iteration $(r-1)$. 
\end{proof}

It follows from Thm.~\ref{thm:llc} that for $\hat{\theta}_{i,j}^{r}$ calculated according to (\ref{eq:thetaupdate}), we have $\hat{\theta}_{i,j}^{r+1} \leq \hat{\theta}_{i,j}^{r}$ for all iterations $r \geq 1$, and similarly, $\hat{\Theta}_{i,j}^{r+1} \subseteq \hat{\Theta}_{i,j}^{r}$ for all iterations $r \geq 1$. 
Furthermore, $\hat{\theta}_{i,j}^{r}$ calculated according to (\ref{eq:thetaupdate}) represents a worst-case capacity state depletion for an agent (\ref{eq:agentdyn}) in closed-loop with the low-level policy (\ref{eq:lowlevellmpc}), (\ref{eq:lowlevellmpc-rec}) along each edge $e_{i,j}$ for all iterations beyond $r$. 
Therefore, using $\hat{\theta}_{i,j}^{r}$ at iteration $r$ ensures that the high-level policy (\ref{eq:highlevellmpc}) plans robustly against all $\theta \in \hat{\Theta}_{i,j}^{r}$ updated according to (\ref{eq:bigthetaupdate}). 
This is stated formally in Thm.~\ref{thm:itfeas}.

\begin{theorem}[Iterative Feasibility]\label{thm:itfeas}
Consider a system (\ref{eq:agentdyn}) in closed-loop with the hierarchical controller outlined in Alg.~\ref{alg:hierarchy}, where $\pi^{H,r}$ is as in (\ref{eq:highlevellmpc}) and $\pi^{L,r}$ as in (\ref{eq:lowlevellmpc}). 
Let $\mathcal{S}^{L,r}_{i,j}$ be the safe set corresponding to each edge $e_{i,j}$ as defined in (\ref{eq:lowlevelss-update}), and $\mathcal{S}^{H,r}$ be the safe set as defined in (\ref{eq:highlevelss-update}). Then, the hierarchical controller outlined in Alg.~\ref{alg:hierarchy} will be iteratively feasible as outlined in Def.~1 from the initial state $x^r_0 = x_S$ for all iterations $r \geq 1$.
\end{theorem}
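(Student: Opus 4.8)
The plan is to argue by induction on the iteration counter, using feasibility of ${\bf{x}}^r$ as the hypothesis and deriving feasibility of ${\bf{x}}^{r+1}$, which is exactly the implication required by Def.~1. The backbone of the argument is the consequence of Theorem~\ref{thm:llc} already recorded above: because the per-edge capacity drain is non-increasing across iterations, the estimate produced by (\ref{eq:thetaupdate}) satisfies $\hat{\theta}^{r+1}_{i,j}\leq\hat{\theta}^{r}_{i,j}$, and $\hat{\theta}^{r+1}_{i,j}$ is a valid worst-case bound on the true per-edge depletion realized by the closed-loop low-level policy at iteration $r+1$. Feasibility of ${\bf{x}}^r$ also guarantees that each edge assigned in iteration $r$ was traversed, so the sets $\mathcal{S}^{L,r+1}_{i,j}$ of (\ref{eq:lowlevelss-update}) and the trajectory processed into $\mathcal{S}^{H,r+1}$ through (\ref{eq:highlevelss-update}) and Alg.~\ref{alg:cap} are well defined and non-empty. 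I would then split the proof into low-level executability, feasibility of the high-level problem at the first event, and propagation of high-level feasibility across events; the same candidate constructions, applied to the initial dataset of Sec.~\ref{ssec:initialdataset}, anchor the induction by certifying feasibility of ${\bf{x}}^1$.

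For low-level executability, Theorem~\ref{thm:llc} applies verbatim on every assigned edge: the low-level policy (\ref{eq:lowlevellmpc}) drives the agent from $\mathcal{N}_i$ to $\mathcal{N}_j$ whenever the capacity at departure lies in $[0,\,C_l-\hat{\theta}^{r+1}_{i,j,l}]$, and the realized drain is then at most $\hat{\theta}^{r+1}_{i,j,l}$. This entry condition is precisely what the high-level capacity constraint (\ref{eq:39}) imposes on any assigned edge, so whatever route the high level commits to can in fact be executed by the low level, and it is executed at a capacity cost no larger than the one the high level used when planning.

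For the high-level problem, I would exhibit an explicit feasible candidate at event $k=0$ of iteration $r+1$, namely the initial segment of the iteration-$r$ route of length $\min(N,K)$. This segment starts at $x_S$, respects the routing and no-revisit constraints, and terminates at a node that, together with its worst-case capacity value, lies in $\mathcal{S}^{H,r+1}$ because the full iteration-$r$ trajectory was processed into the safe set with the \emph{same} estimate $\hat{\theta}^{r+1}$; since $\hat{\theta}^{r+1}\leq\hat{\theta}^{r}$, the capacities accrued forward from zero are dominated by the stored worst-case levels, so the terminal constraints (\ref{eq:lowlevellmpcconstraintF}) and the running capacity bounds (\ref{eq:37})--(\ref{eq:39}) hold. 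Propagation of feasibility from event $k$ to $k+1$ then follows the standard LMPC shift argument: the optimal event-$k$ plan ends at some $\bar{n}\in\mathcal{S}^{H,r+1}$, and deleting its already-executed first edge leaves an admissible path of length at most $N-1$ that still ends at $\bar{n}$ and still avoids the nodes in ${\bf{n}}^{r+1}_{:k+1}$ (cf.\ (\ref{eq:prevpathconstraint})); the slack created because the realized drain on the executed edge is no larger than the planned $\hat{\theta}^{r+1}$ keeps every capacity inequality satisfied. Since under $\hat{\theta}^{r+1}\leq\hat{\theta}^{r}$ membership in $\mathcal{S}^{H,r+1}$ certifies a feasible return path to the depot $D$, the terminal constraint eventually drives the agent back to $D$, so ${\bf{x}}^{r+1}$ is completed feasibly.

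I expect the main obstacle to be the cross-level consistency underlying these candidate constructions: one must check that the discrete, event-based capacity bookkeeping of the high level (propagated with the planning estimate $\hat{\theta}^{r+1}$) never underestimates the continuous, time-based capacity actually consumed by the low level (propagated with the true $\theta$), so that every inequality certified during planning survives closed-loop execution. This reduces to showing that the gap between $\hat{\theta}^{r+1}$ and the realized drain always points in the safe direction, which is exactly the content of the monotonicity and worst-case bound furnished by Theorem~\ref{thm:llc}; carefully chaining these two levels, rather than any single algebraic estimate, is where the proof requires the most care.
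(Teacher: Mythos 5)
Your proposal is correct and matches the paper's proof in all essentials: the same induction over iterations with a nested induction over events, the same candidate constructions (the previous iteration's stored trajectory segment at event $k=0$, an LMPC-style shift at event $k \to k+1$), and the same reliance on Theorem~\ref{thm:llc} for low-level executability via constraint (\ref{eq:39}), for the monotonicity $\hat{\theta}^{r+1} \leq \hat{\theta}^{r}$, and for the safe-direction gap between planned and realized capacity drain ($c^r_{k+1} \leq c^{r,\star}_{k+1|k}$). The only cosmetic difference is in the shift step, where the paper appends the next edge of the stored safe-set trajectory to form a length-$N$ candidate while you truncate to a length-$(N-1)$ path ending at $\bar{n}$ --- equally admissible since the horizon constraint is the inequality $\sum_{i,j} u_{i,j} \leq N$.
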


\begin{proof} 
At event $k=0$ (occurring at time $t=0$) of iteration $r$, the agent is located in the depot, with
\begin{align}
    &x_0^r = x_S = ({\bf{0}}_{L}, \bar{\chi}_D) \nonumber\\
    &x_0^{H,r} = (D, {\bf{0}}_{L}).\nonumber
\end{align}
Assume that the hierarchical controller outlined in Alg.~\ref{alg:hierarchy}, where $\pi^{H,r-1}$ is as in (\ref{eq:highlevellmpc}) and $\pi^{L,r-1}$ as in (\ref{eq:lowlevellmpc}) is feasible at iteration $r-1 \geq 0$. Therefore, $\mathcal{S}^{H,r}$ is non-empty, and $\mathcal{S}^{L,r}_{i,j}$ is non-empty for each edge $e_{i,j}$ in $G$. 
In particular, there exists a feasible high-level trajectory ${\bf{x}}^{H,r-1} \in \mathcal{S}^{H,r}$ with related input $u^{H,r-1}$, where $x^{H,r-1}_0 = x^{H,r}_0 = (D, {\bf{0}}_{L})$. 
At event $k=0$ of iteration $r$, the N-step high-level trajectory
\begin{align}\label{eq:oldhighlevel}
    ({\bf{n}}^{r-1}_{0:N}, {\bf{c}}^{r-1}_{0:N}) \in \mathcal{S}^{H,r},
\end{align}
where ${\bf{c}}^{r-1}_{0:N}$ is the sequence of high-level capacity states at events $k=0$ through $k=N$ of iteration $(r-1)$ and ${\bf{n}}^{r-1}_{0:N}$ the sequence of nodes from events $k=0$ through $k=N$ of iteration $(r-1)$,
associated with the input
\begin{align}\label{eq:oldhighinput}
    {\bf{u}}^{H,r}_{0|0} &= u^{r-1}_0
\end{align}
and capacity states
\begin{align}\label{eq:oldcapstates}
    {\bf{c}}^{r}_{0:N|0} = {\bf{c}}^{r-1}_{0:N}
\end{align}
is a feasible high-level trajectory which satisfies all constraints (\ref{eq:firsthighlevelconstraint}) - (\ref{eq:lastlmpcconstraint}). 
Note particularly that, as follows from Thm.~\ref{thm:llc}, $\hat{\theta}^{r-1}_{D, n^{r-1}_1} \geq \hat{\theta}^r_{D, n^{r-1}_1}$, and therefore constraints (\ref{eq:37})-(\ref{eq:39}) will be satisfied at iteration $r$ with (\ref{eq:oldcapstates}).
Thus, (\ref{eq:highlevellmpc}) is feasible at event $k=0$ of iteration $r$. 

We denote the optimal $N$-step high-level state trajectory as determined by the optimal control problem (\ref{eq:highlevellmpc}) at event $k=0$ of iteration $r$ as
\begin{align}\label{eq:highlevelref}
    {\bf{x}}^{H,r,\star}_{0:N|0} = ({\bf{n}}^{H,r,\star}_{0:N|0}, {\bf{c}}^{H,r,\star}_{0:N|0})
\end{align}
where $n^{H,r,\star}_{1|0} = j$ is sent as the high-level reference node for the low-level controller to track. 
Since (\ref{eq:highlevelref}) must satisfy constraint (\ref{eq:39}), it follows that
\begin{align}
    C_l \geq \hat{\theta}^r_{D,j,l},~ \forall l \in [1,L].\nonumber
\end{align}
Therefore, the agent state $x^r_0$ is such that it satisfies the initial state conditions in Thm.~\ref{thm:llc}, and the low-level policy (\ref{eq:lowlevellmpc}),(\ref{eq:lowlevellmpc-rec}) will be feasible for all time steps $t \in [0, \mathcal{I}({\bf{x}}^r,D,j)]$ and bring the agent to node $v_j$.

Assume at event $k$ (occurring at time $t$) of iteration $r$, where the agent is at node $v_i$ with state $x^r_t = (c^r_t, \chi^r_t)$ so that
\begin{align}
    &x^r_t \in \mathcal{N}_i\nonumber\\
    &g(x^r_t) = (i, c^r_k)\nonumber
\end{align}
with capacity states $c^r_k = c^r_t$, the high-level optimal control problem (\ref{eq:highlevellmpc}) is feasible. Let 
\begin{align}\label{eq:ffdd}
    {\bf{x}}^{H,r,\star}_{k:k+N|k} = (  {\bf{n}}^{r,\star}_{k:k+N|k},  {\bf{c}}^{r,\star}_{k:k+N|k})
\end{align}
be the optimal high-level trajectory containing the high-level capacity state trajectory ${\bf{c}}^{r,\star}_{k:k+N|k}$, corresponding to input $u^{H,r\star}_k$. 
The terminal constraints (\ref{eq:cinequality}), (\ref{eq:lmpcinequality}) dictate that $\exists k^{\star} \geq 0$ such that
\begin{align}
    n^{r,\star}_{k+N|k} = n^{p}_{k^{\star}},~ c^{r,\star}_{k+N|k} \leq c^p_{k^{\star}},\nonumber
\end{align}
where $( n^{p}_{k^{\star}}, c^p_{k^{\star}} ) \in \mathcal{S}^{H,r}$, and $p \leq r$. 

Since (\ref{eq:highlevelref}) must satisfy constraint (\ref{eq:39}), it follows that
\begin{align}
    C_l \geq \hat{\theta}^r_{i,n^{r,\star}_{k+1|k},l},~ \forall l \in [1,L],\nonumber
\end{align}
and therefore the low-level optimal control problem will be feasible for all time steps $t \in \mathcal{I}({\bf{x}}^r,i,n^{r,\star}_{k+1|k})$ and bring the agent to node $n^{r,\star}_{k+1|k}$. 

The planned capacity state trajectory in (\ref{eq:ffdd}) utilizes the high-level model estimate $\hat{\theta}^r$ to predict how capacity states will drain across the planned set of tasks.
However, as shown in Thm.~\ref{thm:llc}, the agent's true capacity state consumption during iteration $r$ may be less than predicted by $\hat{\theta}^r$ (as will be captured in $\hat{\theta}^{r+1}$ when updated according to (\ref{eq:thetaupdate})).
Thus the high-level state at event $k+1$, when the agent has reached node $n^{r,\star}_{k+1|k}$, is
\begin{align}
    x^{H,r}_{k+1} = (n^{r,\star}_{k+1|k}, c^r_{k+1})\nonumber
\end{align}
where $c^r_{k+1} \leq c^{r,\star}_{k+1|k}$.

At event $k+1$ of iteration $r$, the input matrix $u \in \{0,1\}^{|V|\times|V|}$ such that
\begin{align}\label{eq:rere}
    u_{i,j \in \mathcal{V}} = \begin{cases}
    0 & i = n^r_k,~ j = n^r_{k+1} \\
    1 & i = n^{r\star}_{k+N|k},~ j = n^{p}_{k^{\star}+1} \\
        u^{H,r}_{k,i,j} & \text{else}
    \end{cases}
\end{align}
and capacity state values
\begin{align}\label{eq:tyty}
    [c^{r,\star}_{k+1|k}, c^{r,\star}_{k+2|k}, \dots, c^{r,\star}_{k+1|k}, c^{r,\star}_{k+1|k} + \hat{\theta}^r_{i,j}]
\end{align}
where $c^{r,\star}_{k+1|k} + \hat{\theta}^r_{i,j} \leq c^p_{k^{\star}+1}$ and $c^p_{k^{\star}+1} \in \mathcal{SS}^{H,r}$,
satisfy all constraints of the high-level optimal control problem (\ref{eq:highlevellmpc}).
Note that the input matrix in (\ref{eq:rere}) plans an $N$-step path along the graph $G$ that follows the node trajectory
\begin{align}
    [n^{r\star}_{k+1|k}, n^{r\star}_{k+2|k}, \dots, n^{r\star}_{k+N|k},  n^{p}_{k^{\star}+1}].\nonumber
\end{align}
Therefore, (\ref{eq:rere}) and (\ref{eq:tyty}) above are a feasible solution for the high-level optimal control problem at event $k+1$ of iteration $r$. 

We have shown that at iteration $r\geq 1$, if the hierarchical controller as outlined in Alg.~\ref{alg:hierarchy} was feasible at iteration $r-1$, then \textit{i)} the hierarchical controller outlined in Alg.~\ref{alg:hierarchy} is feasible at event $k=0$, and $ii)$ if the hierarchical controller is feasible at event $k$, then it is feasible at event $k+1$. We conclude by induction that the hierarchical controller using (\ref{eq:lowlevellmpc}), (\ref{eq:highlevellmpc}) will result in a feasible trajectory for all iterations $r\geq 1$ from the initial state $x^r_0 = x_s$.
\end{proof}

\begin{theorem}[Iterative Cost Improvement]
Consider a system (\ref{eq:agentdyn}) in closed-loop with the hierarchical controller outlined in Alg.~\ref{alg:hierarchy}, where $\pi^{H,r}$ is as in (\ref{eq:highlevellmpc}) and $\pi^{L,r}$ as in (\ref{eq:lowlevellmpc}). 
Let $\mathcal{S}^{L,r}_{i,j}$ be the safe set corresponding to each edge $e_{i,j}$ as defined in (\ref{eq:lowlevelss-update}), and $\mathcal{S}^{H,r}$ be the safe set as defined in (\ref{eq:highlevelss-update}). Then, the hierarchical controller outlined in Alg.~\ref{alg:hierarchy} will exhibit iterative performance improvement as outlined in Def.~2 from the initial state $x^r_0 = x_S$ for all iterations $r \geq 1$.
\end{theorem}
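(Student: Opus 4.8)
The plan is to adapt the standard LMPC cost-improvement argument to the discrete node-counting objective $V_n$, leaning on iterative feasibility (Thm.~\ref{thm:itfeas}) and on the monotonicity of the safe set. First I would observe that, by (\ref{eq:highlevelss-update}), $\mathcal{S}^{H,r+1} = \mathcal{S}^{H,r} \cup s({\bf{x}}^{H,r}, \hat{\theta}^{r+1})$, so the worst-case-capacity high-level trajectory of iteration $r$ (produced by Alg.~\ref{alg:cap}) is contained in $\mathcal{S}^{H,r+1}$. For each event $k$ of iteration $r+1$, let $J^{r+1}_k$ denote the optimal value of the high-level problem (\ref{eq:highlevellmpc}) from the closed-loop state $x^{H,r+1}_k$. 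The whole proof then reduces to the chain $V_n({\bf{x}}^{r+1}) \geq J^{r+1}_0 \geq V_n({\bf{x}}^r)$.

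For the right inequality I would exhibit a feasible candidate at event $k=0$ of iteration $r+1$. Since the agent starts at $(D, {\bf{0}}_L)$ and $\hat{\theta}^{r+1} \leq \hat{\theta}^{r}$ (which follows from Thm.~\ref{thm:llc}), the first $N$ edges of iteration $r$'s node sequence together with the worst-case capacities from Alg.~\ref{alg:cap} satisfy all constraints (\ref{eq:lowlevellmpcconstraintA})--(\ref{eq:lowlevellmpcconstraintF}); this is exactly the candidate constructed in the proof of Thm.~\ref{thm:itfeas}. Its objective equals the number of new nodes on those $N$ edges plus $Q^{H,r+1}((n^r_N, \bar c_N))$, and because the processed iteration-$r$ trajectory lies in $\mathcal{S}^{H,r+1}$, the terminal term equals the number of nodes visited after event $N$ in iteration $r$. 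Since an iteration visits each node at most once, these two counts are disjoint and sum to $V_n({\bf{x}}^r)$, giving $J^{r+1}_0 \geq V_n({\bf{x}}^r)$.

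For the left inequality I would set up a monotone accounting along the closed loop of iteration $r+1$. Let $\mu_k$ be the number of distinct non-depot nodes visited strictly before event $k$. At event $k$ the optimizer commits to a first edge into a previously unvisited node (guaranteed by (\ref{eq:prevpathconstraint})), so $\mu_{k+1} = \mu_k + 1$; and the shift-and-append candidate of Thm.~\ref{thm:itfeas} — drop the executed edge, append the edge from the terminal node $n^p_{k^\star}$ to its successor $n^p_{k^\star+1}$ in the stored trajectory, and evaluate $Q^{H,r+1}(\cdot)$ there — is feasible with value $J^{r+1}_k - 1$, so $J^{r+1}_{k+1} \geq J^{r+1}_k - 1$. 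Adding these shows $\mu_k + J^{r+1}_k$ is nondecreasing in $k$. Evaluating at $k=0$ (where $\mu_0 = 0$) and at the terminal event $K$ at which the agent re-enters the depot (where $J^{r+1}_K = 0$ and $\mu_K = V_n({\bf{x}}^{r+1})$) yields $V_n({\bf{x}}^{r+1}) \geq J^{r+1}_0$, closing the chain.

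I expect the main obstacle to be the unique-node bookkeeping rather than the feasibility mechanics, which are inherited from Thm.~\ref{thm:itfeas}. Specifically, I must verify that the objective $\sum_{i,j} u_{i,j} + Q^{H,r+1}(\cdot)$ genuinely counts \emph{distinct} nodes, i.e. that the planned $N$-step path never reuses a node already visited this iteration or one contained in the stored tail counted by $Q^{H,r+1}$; this is precisely the role of constraints (\ref{eq:prevpathconstraint}) and (\ref{eq:lastlmpcconstraint}), and the telescoping above is valid only once these rule out any double counting. The secondary technical point is the final edge back to the depot, on which $\mu$ does not increase: I would handle it by defining $J^{r+1}_K = 0$ consistently and checking that the monotone step degenerates correctly on that last edge.
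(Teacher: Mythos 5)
Your proposal is correct and takes essentially the same route as the paper's proof: the paper likewise sandwiches the optimal high-level value $V^{H,r}_{n,0\rightarrow N}(x_0^{H,r})$ between $V_n({\bf{x}}^{r-1})$ (feasibility of the previous iteration's first $N$ steps plus the terminal cost $Q^{H,r}$, inherited from Thm.~\ref{thm:itfeas} and $\hat{\theta}^{r}\leq\hat{\theta}^{r-1}$) and $V_n({\bf{x}}^{r})$ (the per-event inequality that the value at event $k$ is at most the stage reward plus the value at event $k+1$, proved with the same shift-and-append candidate and telescoped to the depot where the value is zero). Your monotone invariant $\mu_k + J^{r+1}_k$ is just the paper's iterated inequality written as a conserved bookkeeping quantity, and your disjointness caveat is exactly the paper's appeal to constraints (\ref{eq:prevpathconstraint}) and (\ref{eq:lastlmpcconstraint}) to justify splitting $V_n^H$ across trajectory segments.
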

\begin{proof}
By definition of the iteration cost, the closed-loop performance of the hierarchical controller at iteration $r$ in Alg.~\ref{alg:hierarchy} is given by
\begin{align*}
    V_n({\bf{x}}^r) = V_n^H(g_{\tau}({\bf{x}}^r) = V_n^H({\bf{x}}^{H,r})\nonumber
\end{align*}
and captures how many unique tasks the agent completes along the trajectory ${\bf{x}}^r$ before returning to the depot. We note this is the same performance function the high-level policy (\ref{eq:highlevellmpc}) aims to maximize at each event $k$.
It follows from Thm.~\ref{thm:llc} and Thm.~\ref{thm:itfeas} that the high-level optimal control problem (\ref{eq:highlevellmpc}) always plans $N$-step trajectories of nodes and capacity state values which the low-level policy (\ref{eq:lowlevellmpc}) can accurately track. It therefore suffices in this proof to only consider the performance of the high-level policy (\ref{eq:highlevellmpc}), recognizing that the high-level trajectories are inherently designed to be feasible for the low-level policy. 

The proof follows arguments from Thm. 2 in \cite{lmpc}.
Consider a closed-loop trajectory ${\bf{x}}^{r-1}$ produced by Alg.~\ref{alg:hierarchy} at iteration $(r-1)$. By definition of the iteration cost, we have
\begin{subequations}
\begin{align}
    V_n({\bf{x}}^{r-1}) &= V_n^H({\bf{x}}^{H,r-1}) \nonumber \\
    &\leq V_n^H({\bf{x}}^{H,r-1}_{0:N-1}) +Q^{H, r}(x^{H,r-1}_N) \nonumber \\
    & \leq \max_{{\bf{x}}^H_{0:N}} ~ V_n^H({\bf{x}}^{H}_{0:N-1}) + Q^{H, r}(x^{H}_N)\label{eq:lmpcreform}\\
    & = V^{H, r}_{n, 0 \rightarrow N}(x_0^{H,r}), \label{eq:lowerbound}
\end{align}
\end{subequations}
where $V^{H, r}_{n, 0 \rightarrow N}(x_0^{H,r})$ is the solution to (\ref{eq:highlevellmpc}), the predicted closed-loop cost for iteration $r$ at event $k=0$ from initial state $x_0^{H,r} = x_0^{H,r-1}$.  
Note that, as shown in Thm.~\ref{thm:itfeas}, ${\bf{x}}^{H,r-1}_{0:N}$ is a feasible solution (\ref{eq:lmpcreform}). 
Now, notice that 
    \begin{align}
    & V^{H, r}_{n, 0 \rightarrow N}(x_k^{H,r}) = \nonumber\\
    &=  \max_{{\bf{x}}^H_{k:k+N|k}} ~ V_n^H({\bf{x}}^{H}_{k:k+N-1|k}) + Q^{H,r}(x^H_{k+N|k}) \label{eq:1}\\
    & = V_n^H(x^{H,r,\star}_{k|k}) + V_n^H({\bf{x}}^{H,r,\star}_{k+1:k+N-1|k}) + Q^{H,r}(x^{H,r,\star}_{k+N|k}) \label{eq:2}\\
    & = V_n^H(x^{H,r,\star}_{k|k}) + V_n^H({\bf{x}}^{H,r,\star}_{k+1:k+N-1|k}) +  V_n^H({\bf{x}}^{H,p}_{k\star:}) \label{eq:pkstar}\\
    &= V_n^H(x^{H,r,\star}_{k|k}) + V_n^H({\bf{x}}^{H,r,\star}_{k+1:k+N-1|k})  + V_n^H(x^{H,p}_{k\star}) + \nonumber\\
    & ~~~~~~~~~~~~~~~~~~~~~~~~~~~~~~~~~~~~~~~~~~~~+ Q^{H,r}(x^{H,p}_{k\star+1}) \nonumber\\
    &\leq  V_n^H(x^{H,r,\star}_{k|k}) + V^{H, r}_{n, 0 \rightarrow N}(x_{k+1|k}^{H,r,\star})\label{eq:takeaway} \\
    &\leq  V_n^H(x^{H,r,\star}_{k|k}) + V^{H, r}_{n, 0 \rightarrow N}(x_{k+1}^{H,r}),\label{eq:takeaway2} 
\end{align}
Where $p,~ k\star$ in (\ref{eq:pkstar}) are as defined in (\ref{eq:highcosttogo}).
Note that we can break the cost $V_n^H$ evaluated on a closed-loop trajectory ${\bf{x}}^{H}$ into the sum of costs evaluated on components of the trajectory (as in (\ref{eq:1}) and (\ref{eq:2})) because the high-level optimal control problem enforces in constraint (\ref{eq:prevpathconstraint}) that planned paths do not contain nodes already traversed in the current iteration. 
(\ref{eq:takeaway}) states that the predicted closed-loop cost for iteration $r$ at event $k$ is bounded above by the sum of the current stage cost and the closed-loop cost at the predicted state at event $k+1$. This bound is further improved in~(\ref{eq:takeaway2}), since (as shown in Thm.~\ref{thm:llc} and Thm.~\ref{thm:itfeas}) $c_{k+1|k}^{H,r,\star}\geq c_{k+1}^{H,r}$, potentially allowing for a better route to be planned at event $k+1$ than was expected at event $k$. 
Iteratively applying (\ref{eq:takeaway2}) at event $k=0$ gives
    \begin{align}
        V^{H, r}_{n, 0 \rightarrow N}(x_0^{H,r}) & \leq V_n^H(x_0^{H,r}) + V^{H, r}_{n, 0 \rightarrow N}(x_1^{H,r}) \nonumber\\
    & \leq V_n^H({\bf{x}}_{0:1}^{H,r}) + V^{H, r}_{n, 0 \rightarrow N}(x_2^{H,r}) \nonumber\\
    & \leq \lim_{k \rightarrow \infty} \big( V_n^H({\bf{x}}_{0:k-1}^{H,r}) + V^{H, r}_{n, 0 \rightarrow N}(x_k^{H,r}) \big)\label{eq:lastlabel}.
    \end{align}
The agent is constrained to return to the depot (node $D$) by the end of the task, and thus by definition of (\ref{eq:hierarchicalcost}),
\begin{align}
    \lim_{k \rightarrow \infty} V_{n, 0 \rightarrow N}^{H,r} ( x^{H,r}_{k}) = V_{n, 0 \rightarrow N}^{H,r} ((D, \cdot)) = 0. \label{eq:blep}
\end{align}
From (\ref{eq:lastlabel}), 
\begin{align}
    V^{H, r}_{n, 0 \rightarrow N}(x_0^{H,r}) & \leq \lim_{k \rightarrow \infty}V_n^H({\bf{x}}_{0:k-1}^{H,r}) = V_n^H({\bf{x}}^{H,r}),\label{eq:upper}
\end{align}
and from (\ref{eq:lowerbound}), (\ref{eq:upper}), we conclude
\begin{align}
     V_n^H({\bf{x}}^{H,r-1}) \leq V^{H, r}_{n, 0 \rightarrow N}(x_0^{H,r}) \leq V_n^H({\bf{x}}^{H,r}), \nonumber
\end{align}
and therefore $V_n({\bf{x}}^{r-1}) \leq V_n({\bf{x}}^{r})$.
Therefore, the Alg.~\ref{alg:hierarchy} results in at least as many tasks being visited during iteration $r$ than iteration $r-1$, and the hierarchical controller exhibits iterative performance improvement from the initial state $x_0^r = x_s$.
\end{proof}

\section{Example}
We demonstrate the effectiveness of the proposed learning hierarchical control approach in a simple simulation example.

\subsection{Agent}
We consider a single agent ($M=1$) and a task graph containing $|V|=7$ nodes, as depicted in Fig.~\ref{fig:PathAssignment}. 
At time step $t$ of iteration $r$, the agent is modeled with state 
\begin{align}
    x^r_t = (z^r_t, y^r_t, \theta^r_t, v^r_t, soc^r_t, \tau^r_t),
\end{align}
where $z^r_t$ and $y^r_t$ denote the agent's position, $\theta^r_t$ the steering angle, $v^r_t$ the velocity, $soc^r_t$ the battery state of charge, and $\tau^r_t$ how much time has passed since the agent left the depot node $\mathcal{N}_1$ at the beginning of the iteration. 
The agent state evolves according to the kinematic bicycle model
\begin{subequations}\label{eq:exampleagentmodel}
    \begin{align}
        z^r_{t+1} &= z^r_t + v^r_t \cos{\theta^r_t} dt \\
        y^r_{t+1} &= y^r_t + v^r_t \sin{\theta^r_t} dt \\
        \theta^r_{t+1} &= \theta^r_t + \delta^r_t dt \\
        v^r_{t+1} &= v^r_t + a^r_t dt \\
        soc^r_{t+1} &= soc^r_t + \alpha v^r_t dt \\
        \tau^r_{t+1} &= \tau^r_t + dt,
    \end{align}
\end{subequations}
where the parameter $\alpha$ determines the battery discharge rate; here we use $\alpha = 1.6$ The two actuation inputs are the steering rate $\delta^r_t$ and acceleration $a^r_t$. We use a time step of $dt = 0.1$. 
The agent states and inputs are constrained as
\begin{subequations}\label{eq:exampleconstraints}
    \begin{align}
        \theta^r_t &\in [-\pi, ~\pi] \\
        v^r_t &\in [0, ~5] \label{eq:examplevelconstraint}\\
        soc^r_t &\in [0, ~100] \\
        \tau^r_t &\in [0,~ 120] \\
        \delta^r_t &\in [-2, ~2] \\
        a^r_t &\in [-2, ~2].
    \end{align}
\end{subequations}

There are two capacity states: the state of charge of the agent and how much time has passed since the beginning of the iteration
$$c^r_t = (soc^r_t, \tau^r_t).$$ At the beginning of each iteration $r$, the agent is located at the depot node $\mathcal{N}_1$ and each capacity state is initialized at $0$, i.e. $c^r_0 = (0,0)$, $x^r_0 = (1,0,0)$. 
The capacity state constraints indicate the agent must visit as many nodes as possible without depleting its battery and return to the depot by the allotted latest time $T_{max} = 120$. Again, we model the battery depletion as increasing along the route.


\subsection{Controller Formulation}
The agent is tasked with completing as many tasks as possible while satisfying all state constraints (\ref{eq:exampleconstraints}), and is controlled using the hierarchical data-driven control structure outlined in Alg.~\ref{alg:hierarchy}.

The high-level controller (\ref{eq:highlevellmpc}) explicitly models the agent's most recent node location and capacity state values in an event-driven fashion, updating whenever the agent reaches a new node. At event $k$ of iteration $r$, the high-level state is
\begin{align}
    x^{H,r}_k = (n^r_k, soc^r_k, \tau^r_k),
\end{align}
where $n^r_k$ is the $k$th node visited by the agent during iteration $r$, $soc^r_k$ is the agent's state of charge when it reaches node $n^r_k$, and $\tau^r_k$ is how much time passed between the agent departing the depot and reaching node $n^r_k$. 
The low-level controller (\ref{eq:highlevellmpc}) uses the agent model in (\ref{eq:exampleagentmodel}). 

As described in Sec.~\ref{ssec:initialdataset}, before beginning iteration $r=1$ we assume the availability of an initial feasible agent trajectory ${\bf{x}}^{0}$ beginning and ending at the depot, along with feasible trajectories along all edges $e_{i,j}$ in $G$. 
These low-level trajectories are created for each edge $e_{i,j}$ by solving (\ref{eq:lowleveliter}), using a conservative velocity constraint $V_{max} = 2$. Note this conservative velocity constraint is enacted only during the creation of the initialization trajectories; at runtime of Alg.~\ref{alg:hierarchy}, the velocity constraint is as in (\ref{eq:examplevelconstraint}).
The stage cost $h$ is set as 
\begin{equation}\label{eq:examplestagecost}
        h(x,j) \begin{cases}
			= 0 & x \in \mathcal{N}_j\\
            1 & \text{else}.
		 \end{cases}
\end{equation}
With this stage cost (\ref{eq:examplestagecost}), the low-level controller (\ref{eq:lowlevellmpc}) mimics a minimum-time controller, planning agent trajectories that minimize the number of time steps before the reference node $\mathcal{N}_j$ is reached. 

Given the initial feasible trajectories, the control parameters $\hat{\theta}^1$ (\ref{eq:thetaupdate}), $\hat{\Theta}^1$ (\ref{eq:bigthetaupdate}), $\mathcal{S}^{H,1}$ (\ref{eq:highlevelss-update}) and $\mathcal{S}^{L,1}$ (\ref{eq:lowlevelss-update}) are calculated for utilization in the high-level (\ref{eq:highlevellmpc}) and low-level (\ref{eq:lowlevellmpc}) MPC controllers. 
At each event $k$ of iteration $r$, the high-level controller (\ref{eq:highlevellmpc}) searches for paths of length $N_H$ from the current node to a node in the safe set $\mathcal{S}^{H,r}$ from which it can safely return to the depot while satisfying capacity constraints according to the estimate $\hat{\theta}^{r}$. The next assigned node is then tracked by the low-level controller (\ref{eq:lowlevellmpc}), which searches for a low-level agent trajectory of length $N_L$ from the current agent state to a state in the safe set $\mathcal{S}^{L,r}$. 
After each additional iteration is completed and the agent has returned to the depot, the parameters are updated for utilization at the subsequent control iteration.
Here we use a different MPC horizon for the high-level controller ($N_H = 3$) than the low-level controller ($N_L = 15$). 

\subsection{Simulation results}

The hierarchical control algorithm is run for five iterations.
Figure~\ref{fig:PathAssignment} depicts the resulting high-level path assignments at three different iterations, where iteration $r=0$ corresponds to the initialization assignment. In iteration $r=0$, the agent visits two unique nodes before returning to the depot. In iteration $r=1$, the first iteration produced by Alg.~\ref{alg:hierarchy},
the high-level controller (\ref{eq:highlevellmpc}) is able to find a more productive task assignment which visits four unique nodes while still satisfying all capacity state constraints. As additional low-level trajectories are collected during each iteration and capacity depletion estimates are subsequently updated, the high-level controller is able to find a final route assignment that visits five unique nodes.
Figure~\ref{fig:CompletedTasks} shows how the number of completed tasks grows with additional task iterations, before converging to a particular task assignment beyond iteration $r=2$ (node $\mathcal{N}_5$ is never able to be visited while satisfying state constraints). In future studies, we will further explore these convergence properties. 


\begin{figure}
    \centering
    \includegraphics[width=0.35\textwidth]{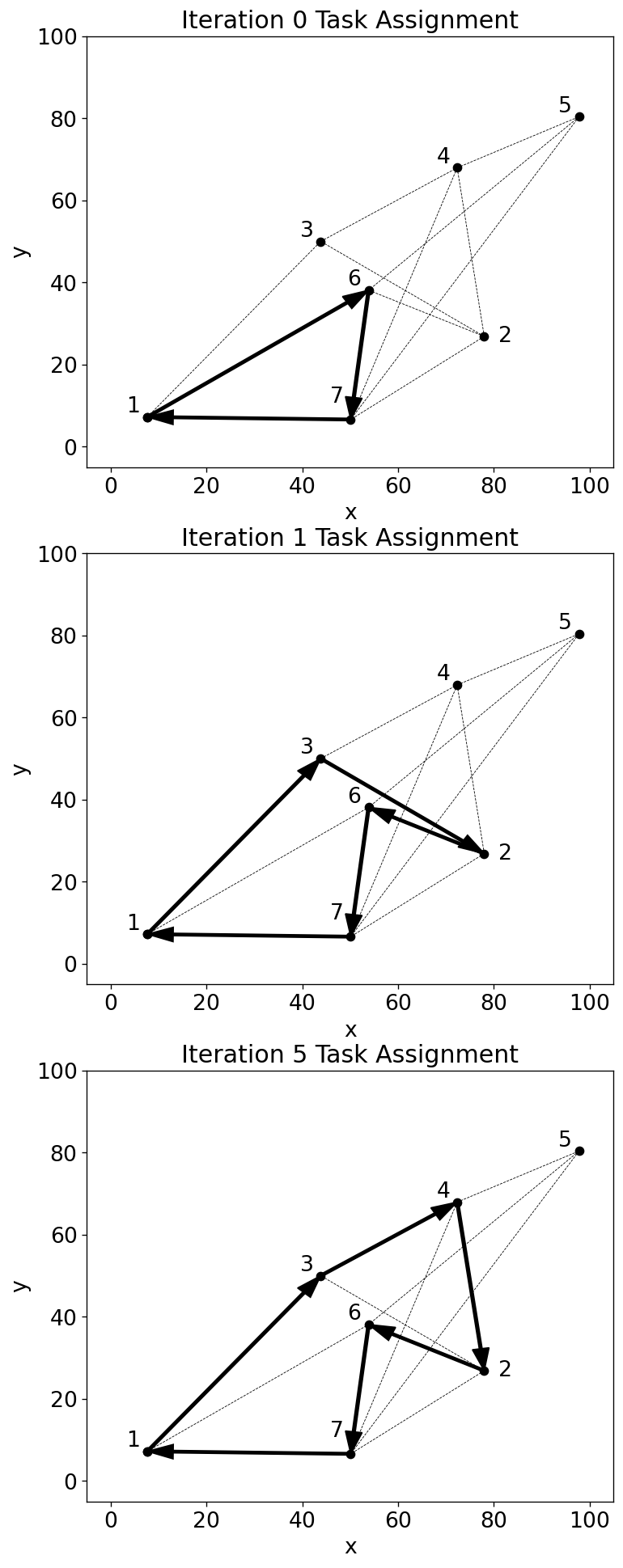}
    \caption{With additional iterations, the high-level controller is able to improve on its initialization assignment (iteration $0$) and plan more productive task assignments, eventually converging to the task assignment shown in the bottom figure (iteration $5$).}
    \label{fig:PathAssignment}
\end{figure}

\begin{figure}
    \centering
    \includegraphics[width=0.32\textwidth]{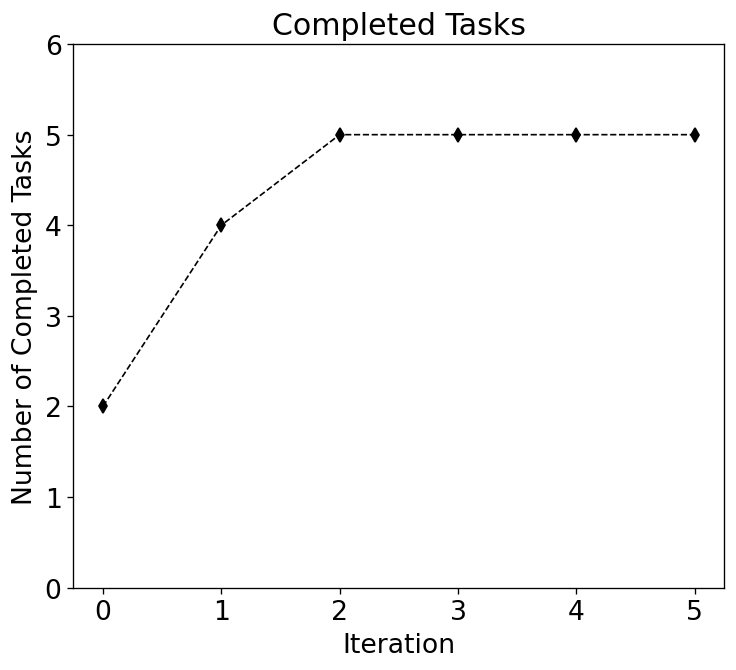}
    \caption{The number of completed tasks increases during the first 3 iterations, before leveling out to a final task assignment.}
    \label{fig:CompletedTasks}
\end{figure}

\begin{figure}
    \centering
    \includegraphics[width=0.37\textwidth]{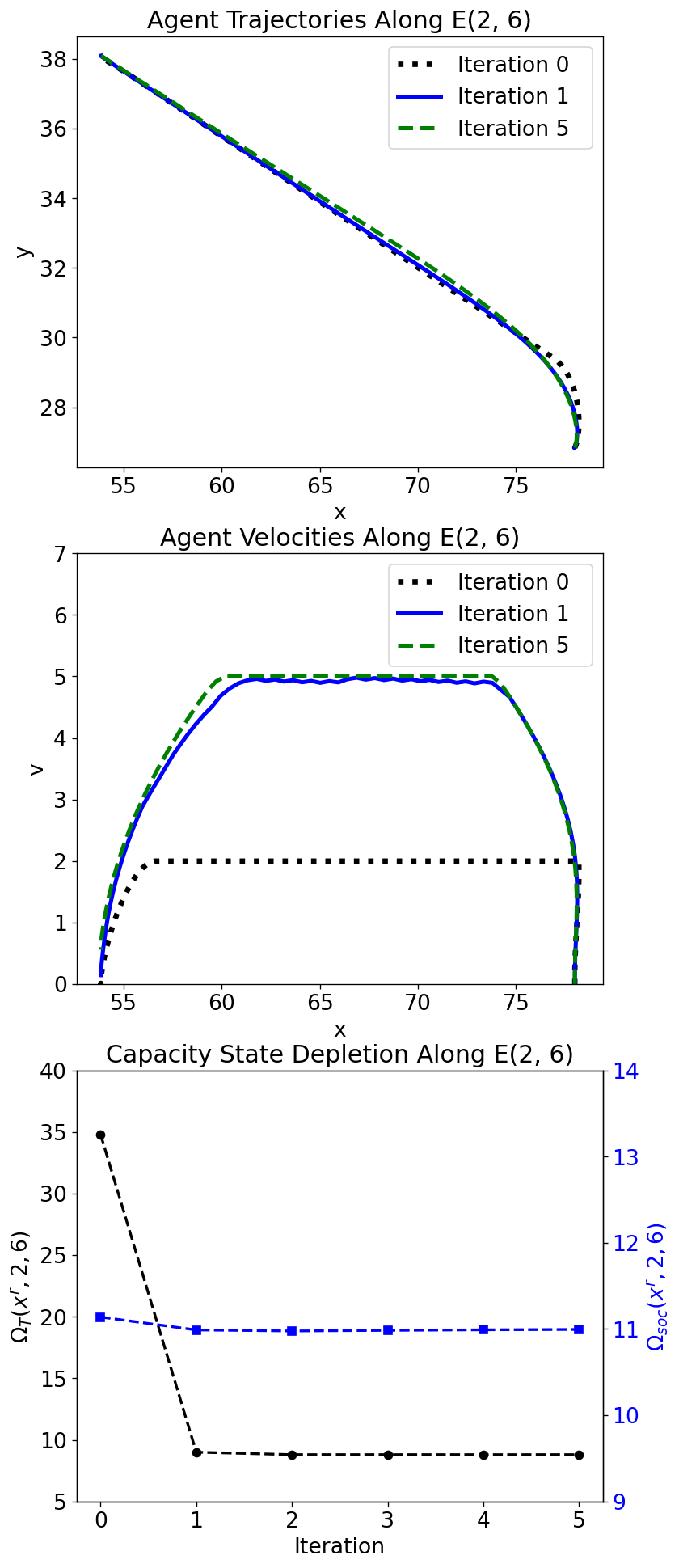}
    \caption{At each traversal of the edge $e_{2,6}$, the low-level controller searches for a trajectory that minimizes the stage cost (\ref{eq:examplestagecost}) without increasing capacity state depletions along the edge. The controller is safely able to guide the agent to increased velocities at subsequent iterations, reducing the time required to traverse the edge while maintaining the agent's state of charge.}
    \label{fig:26EdgePlot}
\end{figure}

Note that high-level routing improvements result from both improved high-level task assignments \textit{and} improved low-level motion planning.
Figure~\ref{fig:26EdgePlot} depicts the agent's trajectories along edge $e_{2,6}$ at three different iterations, where iteration $r=0$ corresponds to the initialization trajectory for the edge. The formulation of the low-level controller (\ref{eq:lowlevellmpc}) allows it to safely improve the closed-loop trajectory with respect to (\ref{eq:examplestagecost}) with each subsequent traversal of any particular edge. As shown in Fig.~\ref{fig:26EdgePlot} for $e_{2,6}$, the controller is able to safely increase the agent's velocity along the edge from $V_{max} = 2$ during the initialization trajectory to $V_{max} = 5$ and, as a result, the time required to traverse the edge improves. Critically, the controller formulation and especially constraint (\ref{eq:lowlevelinequalityterm}) ensures that the battery depletion along the edge never increases between iterations. As shown in Thm.~\ref{thm:itfeas}, this property guarantees the iterative feasibility property of the hierarchical controller.

The high-level controller utilizes the improved low-level trajectories to improve routing assignments.
At event $k=1$ of iteration $r=1$, the agent is at node $\mathcal{N}_3$ (see Fig.~\ref{fig:PathAssignment}). Note that from node $\mathcal{N}_3$, the path $\mathcal{N}_3 - \mathcal{N}_4-\mathcal{N}_2-\mathcal{N}_6$ is a path of length $N_H = 3$ to node $\mathcal{N}_6$, a node traversed in iteration $0$ and therefore in $\mathcal{S}^{H,1}$. However, at the beginning of iteration $r=1$, the capacity state depletion estimate $\hat{\theta}^{1}_{2,6}$ is still too high to allow the high-level controller to plan this route while satisfying the agent's capacity constraints (\ref{eq:exampleconstraints}). 
As a result, the less productive (but feasible, according to $\hat{\theta}^1$) assignment $\mathcal{N}_3 - \mathcal{N}_2-\mathcal{N}_6-\mathcal{N}_7$ is planned at event $k=1$. 
After this traversal of edge $e_{2,6}$ in iteration $r=1$, the capacity state depletion estimate improves (as depicted in Fig.~\ref{fig:26EdgePlot}); at event $k=1$ of iteration $r=2$, the high-level controller is able to safely plan the more productive route $\mathcal{N}_3 - \mathcal{N}_4-\mathcal{N}_2-\mathcal{N}_6$.





This simulation example serves to demonstrate the effectiveness and feasibility of the proposed data-driven hierarchical control framework presented here. In future studies, we will evaluate the framework on larger graphs $G$, exploring the associated computational complexity with solving (\ref{eq:highlevellmpc}) as well as the convergence properties of the data-driven hierarchical controller.

\section{Conclusion}
In this paper, a novel data-driven hierarchical control scheme for task assignment and control of capacity-constrained multi-agent systems is presented. 
The proposed algorithm uses tools from iterative learning control to leverage collected system trajectory data in order to improve estimates of capacity usage during task execution.
Each level of the control hierarchy uses a data-driven MPC policy to maintain bounded computational complexity at each calculation of a low-level actuation input or high-level task assignment. 
We demonstrated how to design a low-level controller with fixed, short horizon that still achieves capacity consumption reduction with each iteration, and show how to design a high-level controller with a fixed, short horizon that safely and dynamically allocates tasks to agents.
We prove that the resulting control algorithm is iteratively feasible and exhibits iterative control performance improvement with each subsequent task iteration.
Future studies will consider how to adjust the framework with the addition of low-level model uncertainty, as well as low-level models that vary with temporal patterns.

\newpage
\section*{APPENDIX}

\subsection{Main Problem}
\begin{subequations}\label{eq:mainproblemappendix}
\begin{align}
    \max_{v,x, u, T} ~ & \sum_{m \in \mathcal{M}} \sum_{i, j \in \mathcal{V}} v^{m}_{i,j} \label{eq:cost}\\
    \text{s.t.}~~ & \sum_{m \in \mathcal{M}}\sum_{i \in \mathcal{V}} v^{m}_{i,j} \leq 1 ~~~~~~~~~~~~~~~~~~~~~\forall j \in \mathcal{V}  \label{eq:bleaving}\\
    & \sum_{m \in \mathcal{M}}\sum_{j \in \mathcal{V}} v^{m}_{i,j} \leq 1 ~~~~~~~~~~~~~~~~~ \forall i \in \mathcal{V} \setminus D \label{eq:barriving}\\
    & \sum_{j \in \mathcal{V}} v^{m}_{i,j} = \sum_{j \in \mathcal{V}} v^{m}_{j,i} ~~~~~~~~~~\forall i \in \mathcal{V}, ~m \in \mathcal{M} \label{eq:bleaving-without-arriving} \\
    & v^m_{i,j} = 0 ~~~~~~~~~\forall i \in \mathcal{V}, ~ j \notin N(i),~ m \in \mathcal{M} \label{eq:onlyneighbors}\\
    & x^m_{t+1} = f(x^m_t, u^{m}_t) ~~~~~\forall t \in [0,\bar{T}], ~m \in \mathcal{M}\label{eq:agentdynamics}\\
    & x^m_{t+1} \in \mathcal{X}, ~ u^{m}_t \in \mathcal{U} ~~~~ \forall t \in [0,\bar{T}], ~m \in \mathcal{M} \label{eq:agentconstraints}\\
    & x^m_0 \in \mathcal{N}_D ~~~~~~~~~~~~~~~~~~~~~~~~~~~~\forall m \in \mathcal{M} \label{eq:start-node}\\
    & x^m_{T^m} \in \mathcal{N}_D ~~~~~~~~~~~~~~~~~~~~~~~~~~\forall m \in \mathcal{M} \label{eq:end-node} \\
    & x^m_{T^m_j} \in (1 - \sum_{i \in \mathcal{V}} v^{m}_{i,j})\mathcal{N}_D + \sum_{i \in \mathcal{V}} v^{m}_{i,j} \mathcal{N}_j  \nonumber\\
    & ~~~~~~~~~~~~~~~~~~~~~~~~~~~~~~~ \forall j \in \mathcal{V}, m \in \mathcal{M}  \label{eq:nodetimedynamics}\\
    & T^m_j \geq T^m_iv^m_{i,j}~~\forall i \in \mathcal{V}, ~j \in \mathcal{V}\setminus D, ~ m \in \mathcal{M} \label{eq:timeconstraint1}\\
    & T^m_j \leq T^mv^m_{i,j} ~~\forall i \in \mathcal{V}, ~j \in \mathcal{V}\setminus D, ~ m \in \mathcal{M}  \label{eq:timeconstraint2} \\
    & T^m \leq \bar{T} ~~~~~~~~~~~~~~~~~~~~~~~~~~~~~~\forall m\in \mathcal{M} 
\end{align}
\end{subequations}
where $\mathcal{M} = \{1, 2, \dots, M \}$ and $\mathcal{V} = \{D, 2, 3, \dots, |V|\}$.
The first decision variable ${{v}} \in \mathbb{B}^{|V|\times |V| \times |M|}$ contains the binary decision variables $v^m_{i,j}$ that describe whether agent $m$ is routed along edge $e_{i,j}$. The objective function therefore maximizes how many total nodes are visited collectively by the agents. 
The second decision variable ${x} \in \mathbb{R}^{n\times \bar{T} \times m}$ tracks all states $x^m_t$ of each $m$th agent at each time step $t \in [0, \bar{T}]$ under the inputs ${{u}} \in \mathbb{R}^{u\times (\bar{T}-1) \times m}$. ${{T}} \in \mathbb{R}^{|V| \times M}$ contains the time steps $T^m_j$ when agent $m$ reaches node $v_j$. If agent $m$ never reaches node $v_j$, $T^m_j = 0$. 
The optimal control problem (\ref{eq:mainproblem}) simultaneously searches for a path ${{v}^{\star}}$ that maximizes the number of nodes reached by agents, and a sequence of continuous actuation inputs ${{u}^{\star}}$ that result in the agents visiting the nodes before returning to the depot. 
Constraints (\ref{eq:bleaving}) - (\ref{eq:barriving}) ensure that each node is visited by at most one agent, and constraint (\ref{eq:bleaving-without-arriving}) ensures that the planned path is continuous.
Constraints (\ref{eq:agentdynamics}) - (\ref{eq:agentconstraints}) describe the dynamics and system constraints of the agent, including all capacity state constraints. 
The constraints (\ref{eq:start-node})-(\ref{eq:end-node}) indicate that at time $t=0$ and $t=T^m$, corresponding to the start and end of the task, agent $m$ is at the depot. Constraint (\ref{eq:nodetimedynamics}) ensures that the agent reaches each node $v_j$ assigned to it at some time $T_j^m$. Lastly, constraints (\ref{eq:timeconstraint1})-(\ref{eq:timeconstraint2}) ensure that the agent visits assigned nodes in the order indicated by ${{v}^{\star}}$.
This problem formulation is based on the Flow Model formulation of the capacitated vehicle routing problem. 
For more details on this formulation, we refer to \cite{routing}.

\subsection{High-level Problem}
\begin{subequations}\label{eq:highlevelproblem-iter-appendix}
\begin{align}
    {{u}}^{r,\star}(x^{H,r}_k&) =   \\
    arg\max_{{u}, {c}} ~ & \sum_{i, j \in \mathcal{V}} u_{i,j} \label{eq:cost}\\
    s.t.~~ & \sum_{i \in \mathcal{V}} u_{i,j} \leq 1 ~~~~~~~~~~~~~~~~~~~~ \forall j \in \mathcal{V}  \label{eq:leaving}\\
    & \sum_{j \in \mathcal{V}} u_{i,j} \leq 1 ~~~~~~~~~~~~~~~~~ \forall i \in \mathcal{V} \setminus D \label{eq:arriving}\\
    & \sum_{j \in \mathcal{V}} u_{i,j} \leq \sum_{j \in \mathcal{V}} u_{j,i}, ~~~~~~~~~~~~~\forall i \in \mathcal{V} \label{eq:leaving-without-arriving} \\
    & \sum_{j \in \mathcal{V} \setminus D} u_{n^{H,r}_k,j} = 1 \label{eq:maxvehicles, leave from depot}\\
    & \sum_{j \in \mathcal{V} \setminus D} u_{j,D} = 1 \label{eq:returntodepot}\\
    & u_{i,j} = 0~~ \forall i \in \mathcal{V},~ j \notin N(i) \\
    & c_{i,j,l} = 0~~\forall i \in \mathcal{V}, ~j \notin N(i),~ l \in [1,L] \\
    & c_{n^{r}_k, j, l} = c^r_{l,k} ~~~~~~~~~~~\forall j \in \mathcal{V},~l\in [1,L] \label{eq:edge_initial} \\
    & \sum_{i \in \mathcal{V}} c_{i,j,l} + \hat{\theta}^{r}_{i,j,l} = \sum_{i \in \mathcal{V}} c_{j,i,l} \nonumber \\
    & ~~~~~~~~~~~~~~~~~~~~~~~~~~ \forall j \in \mathcal{V},~  l \in [1,L], \label{eq:edge_dynamics}\\
    & c_{i,j,l} \geq \theta_{i,j,l} u_{i,j}~ \nonumber \\
    & ~~~~~~~~\forall i \in \mathcal{V}, ~j \in \mathcal{V}, ~ l \in [1,L],~\theta \in \hat{\Theta}^{r}\label{eq:edge_min}\\
    & c_{i,j,l} \leq (C_l - \theta_{i,j,l})u_{i,j} \nonumber\\
    & ~~~~~~~~ \forall i \in \mathcal{V}, ~j \in \mathcal{V} , ~l \in [1,L], ~\theta \in \hat{\Theta}^{r} \label{eq:edge_max}
\end{align}
\end{subequations}
which maximizes how many additional nodes the agent visits while satisfying capacity constraints before returning to the depot to re-charge. Note that the capacity constraints are enforced \textit{robustly} with respect to the estimated bound set $\theta \in \hat{\Theta}^{r}$, since the true value of $\theta$ is not known.
The objective function maximizes how many total nodes are visited by the agents during iteration $r$. 
The variable ${c} \in \mathbb{R}^{|V| \times |V| \times L}$ tracks the capacity states along the routes, with $c_{i,j,l} \in [0, C_l]$ tracking the value of $c_l$ as the agent departs node $v_i$ towards node $v_j$.
Constraints (\ref{eq:leaving}) - (\ref{eq:returntodepot}) are as in (\ref{eq:mainproblem}), ensuring that each node is visited by at most one agent, that paths are well-defined, and that a single route beginning at node $n^{r}_k$ and ending at the depot is created.
Constraints (\ref{eq:edge_initial}) - (\ref{eq:edge_max}) are defined for each capacity state, enforcing the capacity state constraints and dynamics according to (\ref{eq:hldynamics}). 

\subsection{High-Level Dynamics as LTV System}
Note that the high-level dynamics (\ref{eq:hldynamics}) may be compactly written as a linear time-varying system
\begin{align*}
    n^r_{k+1} &= B_n u^r_k \\
    c^r_{k+1} &= c^r_k + B_c(\hat{\theta}^{r}) u^r_k
\end{align*}
where 
\begin{align*}
    & u^r_k  \in \mathbb{B}^{|V|^2},~ ||u^r_k||_1 = 1 \\
    & B_n = {\bf{1}}_{1,|V|} \otimes [1, 2, \dots, |V|]\\
    &B_c(\hat{\theta}^r) \in \mathbb{R}^{L \times V^2},~ B_c(\hat{\theta}^r)_{i,j} =  \hat{\theta}^r_{a, b, i}\\
    & a = [{\bf{1}}_{1,|V|}, {\bf{2}}_{1,|V|}, \dots, {\bf{V}}_{1,|V|}]_j,~~ b = B_{n,j}
\end{align*}

\section*{ACKNOWLEDGMENT}
Part of this research was carried out at the Jet Propulsion Laboratory, California Institute of Technology, under a contract with the National Aeronautics and Space Administration (80NM0018D0004).

\bibliographystyle{IEEEtran}
\bibliography{IEEEfull,bib}

\end{document}